\let\OldStatex\Statex
\renewcommand{\Statex}[1][3]{%
  \setlength\@tempdima{\algorithmicindent}%
  \OldStatex\hskip\dimexpr#1\@tempdima\relax}
\setlist[itemize]{itemsep=0pt, parsep=0.5em}
\newif\iflong
\crefname{equation}{Eqn.}{Eqn.}
\crefname{algorithm}{Alg.}{Alg.}
\def\set@curr@file#1{\def\@curr@file{#1}} %temp workaround for 2019 latex release
\DeclareMathOperator*{\argmin}{arg\,min}
\DeclareMathOperator*{\argmax}{arg\,max}
\newcommand{\Dtrain}{\mathcal{D}_{\mathrm{train}}}
\newcommand{\Dval}{\mathcal{D}_{\mathrm{val}}}
\newcommand{\Dobs}{\mathcal{D}_{\textsc{obs}}}
\newcommand{\WIS}{\scriptscriptstyle \mathrm{WIS}}
\newcommand{\AM}{\scriptscriptstyle \mathrm{AM}}
\newcommand{\FQE}{\scriptscriptstyle \mathrm{FQE}}
\newcommand{\WDR}{\scriptscriptstyle \mathrm{WDR}}
\newcommand{\FQI}{\scriptscriptstyle \mathrm{FQI}}
\newtheorem{theorem}{Theorem}
\theoremstyle{definition}
\theoremstyle{remark}
\newtheorem*{remark}{Remark}
\let\oldproof\proof
\def\proof{\oldproof\unskip}
\title[Model Selection for Offline RL: Practical Considerations for Healthcare]{Model Selection for Offline Reinforcement Learning: Practical Considerations for Healthcare Settings}
\author{%
\Name{Shengpu Tang}
\Email{\href{mailto:tangsp@umich.edu}{\color{black}{tangsp@umich.edu}}}\\ 
\addr Department of Electrical Engineering and Computer Science\\
University of Michigan, Ann Arbor, MI, USA
\AND
\Name{Jenna Wiens}
\Email{\href{mailto:wiensj@umich.edu}{\color{black}{wiensj@umich.edu}}}\\ 
\addr Department of Electrical Engineering and Computer Science\\
University of Michigan, Ann Arbor, MI, USA
}
\begin{document}

\maketitle

\begin{abstract}
  Reinforcement learning (RL) can be used to learn treatment policies and aid decision making in healthcare. However, given the need for generalization over complex state/action spaces, the incorporation of function approximators (e.g., deep neural networks) requires model selection to reduce overfitting and improve policy performance at deployment. Yet a standard validation pipeline for model selection requires running a learned policy in the actual environment, which is often infeasible in a healthcare setting. In this work, we investigate a model selection pipeline for offline RL that relies on off-policy evaluation (OPE) as a proxy for validation performance. We present an in-depth analysis of popular OPE methods, highlighting the additional hyperparameters and computational requirements (fitting/inference of auxiliary models) when used to rank a set of candidate policies. We compare the utility of different OPE methods as part of the model selection pipeline in the context of learning to treat patients with sepsis. Among all the OPE methods we considered, fitted Q evaluation (FQE) consistently leads to the best validation ranking, but at a high computational cost. To balance this trade-off between accuracy of ranking and computational efficiency, we propose a simple two-stage approach to accelerate model selection by avoiding potentially unnecessary computation. Our work serves as a practical guide for offline RL model selection and can help RL practitioners select policies using real-world datasets. To facilitate reproducibility and future extensions, the code accompanying this paper is available online.\footnote{\url{https://github.com/MLD3/OfflineRL_ModelSelection}}
\end{abstract}

\section{Introduction}

In recent years, researchers have begun to explore the potential of reinforcement learning (RL) to aid in sequential decision making in the context of clinical care \citep{yu2019rl4health}. For example, \citet{prasad2017mechvent} considered the problem of weaning from mechanical ventilation, \citet{komorowski2018AI_Clinician} used RL to learn optimal treatments of sepsis, and \citet{futoma2020popcorn,futoma2020soda} explored hypotension management in the ICU. This is an important step towards realizing the promise of RL beyond game-like domains. However, many challenges remain. To date, the most successful applications of RL have been in settings that assume an online paradigm in which the agent can interact with the environment \citep{mnih2015atari,silver2017go,li2010recommend,kalashnikov2018robot}, an assumption that often does not hold in other contexts. In healthcare settings, one typically only has access to historical data that were collected ``offline'', thus necessitating offline RL techniques \citep{levine2020offlineRL}. In addition, the state space associated with clinical decision making is often high-dimensional and/or continuous. Capturing such complex relationships may require learning parameterized functions (e.g., neural networks) \citep{mnih2015atari}. Thus, model selection, i.e., selecting the appropriate hypothesis class and hyperparameters of the function approximators for learning policies, is crucial in offline RL to prevent overfitting. Despite the importance of model selection, we currently lack a well-accepted training-validation framework for offline RL.

One cannot simply rely on the training-validation framework used in supervised learning, in which the final model is selected based on validation performance. In RL, measuring the equivalent of ``validation performance'' requires running the learned policy in the actual environment. While this is common practice when accurate simulators are available \citep{fujimoto2019BCQd,fujimoto2019BCQ,kumar2019BEAR,laroche2019SPIBB,liu2020MBS}, it is not always feasible in healthcare settings; it may be costly or even dangerous to try out multiple untested treatment policies on real patients in order to find out which one is the best. In healthcare, researchers must rely on off-policy evaluation (OPE) techniques for model selection in offline RL. However, this approach has its own challenges. Since many OPE estimators have been developed and they can only approximate validation performance, it is unclear how useful different OPE methods are for model selection. \citet{paine2020hyperparameter} and \citet{fu2021benchmarks} found FQE to be a useful strategy for hyperparameter selection in offline RL on several benchmark control tasks, yet the practical limitations and tradeoffs of these OPE methods have not been systematically examined. Furthermore, OPE methods often require additional model selection, i.e., they involve \textit{additional} hyperparameters that must be set \textit{a priori}. In addition, nearly every OPE method requires auxiliary models that must also be learned from data, leading to (sometimes substantial) increases in computational costs. The lack of guidelines for offline RL model selection has led the majority of empirical RL analyses on healthcare data to rely on pre-specified hyperparameters during training \citep{gottesman2018evaluating,tang2020clinician}, but this does not necessarily result in learning the best policy nor does it enable fair comparisons between different RL algorithms. 

In this work, we propose a training-validation framework that relies on OPE for model selection in offline RL settings. We investigate four popular OPE methods, highlighting practical considerations pertaining to their dependence on additional hyperparameters, auxiliary models, and their associated computational costs when used to rank a set of candidate policies. To quantify the effectiveness of each OPE method for model selection, we perform empirical experiments using a sepsis treatment task \citep{oberst2019gumbel,futoma2020popcorn}. We consider common model selection scenarios including early stopping and/or choosing neural network architectures, for both discrete state spaces (tabular) and continuous state spaces (function approximation). Given the relative computational costs and ranking quality of different OPE estimators, we propose a simple two-stage approach that combines two OPE estimators to avoid expensive computation on low-quality policies. First, we use a more efficient but less accurate OPE method to identify a promising subset of candidate policies, and second, we use a more accurate but less efficient OPE method to select a final policy from the pruned subset.

\subsection*{Generalizable Insights about Machine Learning in the Context of Healthcare}
We tackle the problem of model selection for offline RL applied to observational datasets, such as those in healthcare applications. Selecting appropriate hyperparameters for model-free value-based RL is crucial to prevent overfitting. Our contributions can be summarized as follows:
\begin{itemize}%[noitemsep]
    \item We analyze four popular OPE methods in the context of estimating validation performance of learned policies, highlighting their dependence on additional hyperparameters, auxiliary models, and their computational requirements.
    \item On the problem of using RL to learn policies for treating patients with sepsis, we empirically evaluate the effectiveness of using OPE validation performance for model selection, addressing practical considerations. 
    \item We propose a simple two-stage selection approach that effectively combines OPE estimators, balancing computational efficiency with policy performance. % and outperforms hybrid doubly robust methods;
\end{itemize}
% \vspace{-0.3em}%
RL shows promise in discovering better treatment policies from historical data, but the challenges of learning and evaluating policies in offline settings have impeded its wider adoption in high-stakes clinical domains. Our work provides one of the first empirical demonstrations of offline RL model selection using OPE methods with a focus on practical considerations; it sheds light on the pros and cons of various OPE methods for this purpose and can inform future research and enable fairer comparisons when applying offline RL to problems in healthcare. Though inspired by recent applications of RL to healthcare, many of the insights here generalize beyond healthcare decision making to other high-stakes domains where accurate simulators are unavailable, e.g., intelligent tutoring systems, dialogue systems, and online advertisement delivery. 

\section{Offline Reinforcement Learning - Problem Setup} \label{sec:background}

We consider Markov decision processes (MDPs) defined by a tuple $\mathcal{M} = (\mathcal{S}, \mathcal{A}, p, r, \mu_0, \gamma)$, where $\mathcal{S}$ and $\mathcal{A}$ represent the state and action spaces, $p(s'|s,a)$ and $r(s,a)$ specify the transition and reward models, $\mu_0(s)$ is the initial state distribution, and $\gamma \in [0,1]$ is the discount factor. In this work, we consider settings with discrete/continuous state spaces and a discrete action space. A probabilistic policy $\pi(a|s)$ specifies a mapping from each state to a probability distribution over actions. When the policy is deterministic, $\pi(s)$ refers to the action with $\pi(a|s)=1$. In RL, we aim to find an optimal policy \( \pi^* = \argmax_{\pi} J(\pi; \mathcal{M}) \) that has the maximum expected performance in $\mathcal{M}$. The value of policy $\pi$ with respect to $\mathcal{M}$ is defined as \( v(\pi) = J(\pi; \mathcal{M}) = \mathbb{E}_{s \sim \mu_0}[V^{\pi}(s)] \); here, the state-value function is defined as \( V^{\pi}(s) = \mathbb{E}_{\pi}\mathbb{E}_{\mathcal{M}} \left[ \sum_{t=1}^{\infty}\gamma^{t-1} r_t \ | \  s_1 = s \right] \). The action-value function, $Q^{\pi}(s,a)$, is defined by further restricting the action taken from the starting state. 

Throughout this paper, we focus on the \textit{offline} setting. In contrast to an \textit{online} setting in which one can interact with $\mathcal{M}$ (either directly or via a simulator), in offline RL, one only has access to data that were previously collected. More precisely, one has access to an observational dataset ${\Dobs=\{ s_i, a_i, r_i, s'_i \}_{i=1}^{N}}$ consisting of $N$ transitions, collected from $\mathcal{M}$ by one or multiple behavior policies denoted by $\pi_b$. The dataset can also be considered as a collection of $m$ episodes, $\Dobs=\{ \tau_j \}_{j=1}^{m}$, where each episode is $\tau = (s_1, a_1, r_1, \cdots, s_{{\scriptscriptstyle L}}, a_{{\scriptscriptstyle L}}, r_{{\scriptscriptstyle L}})$ with $L$ denoting the length of episode $\tau$.  Given the challenges such a setting presents (i.e., limited exploration), we cannot necessarily identify an optimal policy \citep{lange2012batch,levine2020offlineRL}; instead, we aim to learn a policy with low suboptimality $J(\pi^*; \mathcal{M}) - J(\pi; \mathcal{M})$. We focus on model-free, value-based methods for offline policy learning -- specifically, fitted Q iteration (FQI) (\citealp{ernst2005batchRL,riedmiller2005NFQ}; see \cref{alg:FQI} in \cref{appx:algs}) -- as it applies to problems with either discrete state spaces (the tabular setting) or continuous state spaces (the function approximation setting, where states are represented as feature vectors $\mathbf{x}(s) \in \mathbb{R}^d$). 

Given a policy $\pi$, its performance (i.e., $J(\pi; \mathcal{M})$, the expected cumulative reward in $\mathcal{M}$) can be estimated from historical data collected by some other policy (or policies) $\pi_b$ using off-policy evaluation (OPE) \citep{voloshin2019OPE}. OPE is inherently difficult because it requires estimating counterfactuals: we want to understand what \textit{would have} happened had the agent acted differently from how it acted in historical data. We discuss four popular OPE methods in \cref{sec:applyOPE} and show how they can be used to estimate validation performance in the context of model selection for offline RL.

\section{Offline RL Model Selection using OPE}

We present a model selection framework for offline RL based on OPE. First, we analyze four common OPE methods in the context of model selection. We highlight practical considerations including their dependence on additional hyperparameters and auxiliary models, as well as computational requirements. Second, we propose a simple two-stage approach for model selection that combines two OPE estimators in order to balance the trade-off between ranking quality and computational efficiency.
\label{sec:pipeline}
\begin{figure}[h]
    \centering
    \includegraphics[scale=1,trim={3cm -0.5cm 4cm -0.5cm}]{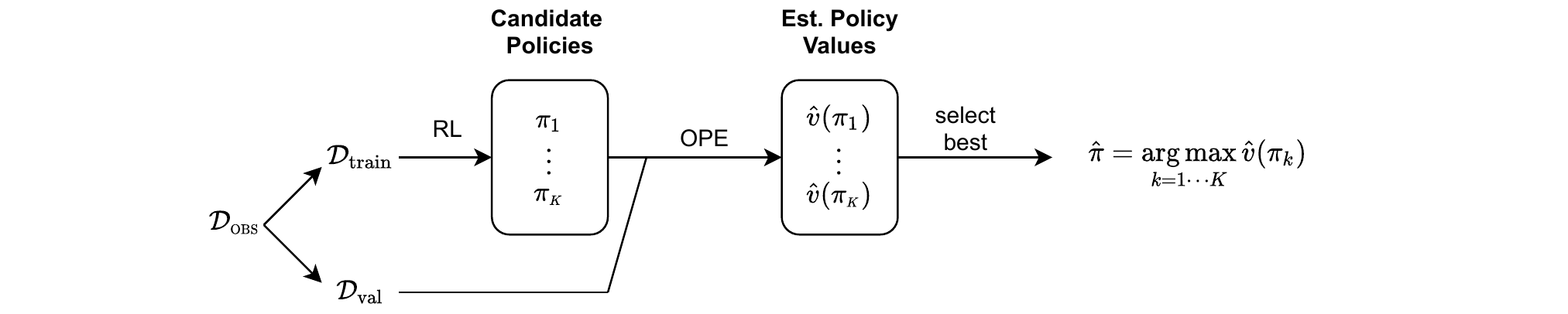}
    \caption{The observational dataset $\Dobs$ is first split into two mutually exclusive parts, $\Dtrain$ and $\Dval$. The training data $\Dtrain$ are used to derive a set of candidate policies by running some RL algorithm (e.g., FQI). Each policy is evaluated using an OPE method on the validation data $\Dval$ to obtain its estimated policy value $\hat{v}(\pi_k)$. Finally, we select the policy that achieves the highest estimated validation performance. }
    \label{fig:pipeline}
\end{figure}

\subsection{A Model Selection Framework for Offline RL} 
The model selection framework commonly used in supervised learning \citep{murphy2012ml,Goodfellow-et-al-2016} can be adapted for offline RL, where OPE methods are used to estimate the validation performance (\cref{fig:pipeline}). Given an observational dataset, we first split it into two mutually exclusive parts: a training set $\Dtrain$ and a validation set $\Dval$, assumed to follow the same data distribution (the same MDP and the same behavior(s)). During training, a set of $K$ candidate policies $\{\pi_k\}_{k=1}^{K}$ are derived from the $\Dtrain$ using an algorithm such as FQI (\cref{alg:FQI}), where each policy is learned with a different hyperparameter setting (e.g., training iterations, neural network architectures). Using OPE, we then assign a scalar score to each candidate policy $\pi_k$ by estimating its value using $\Dval$. The scores collectively result in a ranking that can then be used to identify the best policy. While seemingly straightforward, such a model selection framework has not been consistently applied, in part because of the practical challenges that one encounters in offline settings.

%Moreover, implementing the framework from \cref{fig:pipeline} raises several practical questions that we will discuss in the next subsection. 

%In some past work, e.g., \citet{paine2020hyperparameter}, a separate validation set is not created; instead, the same data is used for both training and validation. While this increases the size of the training set and could potentially lead to better policies, it could increase the risk of overfitting at the validation step.

\subsection{Using OPE for Model Selection}\label{sec:applyOPE}

Compared to measuring validation performance in a supervised learning setting (by checking predictions against ground-truth labels), measuring validation performance in offline RL cannot be simply done using the observed returns from the validation set. This is because from the same state, the evaluation policy may choose a different action (or a different distribution over actions) compared to the behavior policy. Thus, we have to use OPE approaches, which themselves rely on additional hyperparameters or learning additional models. In this section, we provide a qualitative comparison of popular OPE methods. We describe each OPE and its mathematical formalization. In addition, we describe any additional hyperparameters that must be set \textit{a priori}, dependencies on auxiliary models, and computational requirements when used for model selection. As shown in \cref{fig:pipeline}, we use the validation data $\Dval$ as input to OPE, but to simplify the exposition below, we use $\mathcal{D}$ for notational convenience. We assume $\mathcal{D}$ consists of $m$ episodes and $N$ transitions, and there are $K$ candidate policies. For computational complexity, we focus on the function approximation setting that involves fitting classification/regression models; the equivalent of these model dependencies in the tabular setting can be obtained efficiently via maximum likelihood estimation (see \cref{appx:OPE-compute}). 

\subsubsection{Weighted Importance Sampling (WIS)}

Due to differences between the evaluation policy and the behavior policy, each episode can be more (or less) likely to occur under the evaluation policy compared to the behavior policy. Importance sampling estimates the value of the evaluation policy from the validation data by re-weighting episodes according to how relatively likely they are to occur \citep{puaduraru2013OPE,voloshin2019OPE}. 

\medskip\noindent \textit{Formalization.} Given a policy $\pi$ that we aim to evaluate and a behavior policy $\pi_b$ that generated episode $\tau = (s_1, a_1, r_1, \cdots, s_{{\scriptscriptstyle L}}, a_{{\scriptscriptstyle L}}, r_{{\scriptscriptstyle L}})$, we define the per-step importance ratio $\rho_t = \frac{\pi(a_t|s_t)}{\pi_b(a_t|s_t)}$ and the cumulative importance ratio $\rho_{1:t} = \prod_{t'=1}^{t}\rho_{t'}$. The WIS estimator is calculated as 

\[ \hat{v}_{\WIS}(\pi) = \frac{1}{m} \sum_{j=1}^{m} G_{\WIS}(\tau_j; \pi), \text{ where } G_{\WIS}(\tau_j; \pi) = \frac{\rho_{1:L^{(j)}}^{(j)}}{w_{L^{(j)}}} \left( \sum_{t=1}^{L^{(j)}} \gamma^{t-1}r_t^{(j)} \right) , \]

\medskip\noindent where $w_t = \frac{1}{m} \sum_{j=1}^{m}\rho^{(j)}_{1:t}$ is the average cumulative importance ratio at horizon $t$. %WIS can be viewed as a weighted average of the Monte Carlo returns in $\mathcal{D}$ according to how likely each sample episode were to occur under $\pi$ relative to $\pi_b$. 
We use WIS (a biased but consistent estimator) rather than ordinary importance sampling (an unbiased estimator) because it can provide more stable performance \citep{puaduraru2013OPE,voloshin2019OPE}. 

\medskip\noindent \textit{Hyperparameters.} When the evaluation policy $\pi$ is deterministic such that $\pi(a|s)$ is exactly $1$ for a single action and $0$ for all others, an episode will receive zero weight if $\pi$ and $\pi_b$ take different actions at \textit{any} timestep. Such episodes are thus effectively discarded, and only episodes with non-zero weights are included in WIS calculation \citep{gottesman2018evaluating}. This can be especially problematic with small sample sizes and long episodes, or when $\pi$ and $\pi_b$ differ significantly. To overcome this issue, it is common to \textit{soften} policy $\pi$ with the hope of increasing the effective sample size and reducing variance \citep{raghu2017deepRL,raghu2017continuous,komorowski2018AI_Clinician}: e.g., \( \tilde{\pi}(a|s) = \mathbbm{1}_{a=\pi(s)} (1-\varepsilon) + \mathbbm{1}_{a\neq\pi(s)} (\frac{\varepsilon}{|\mathcal{A}|-1}) \). While this results in a value estimate for $\tilde{\pi}$ rather than for $\pi$, it is usually sufficiently close. The hyperparameter $\varepsilon$ is typically set to a small value using rules of thumb (e.g., $\varepsilon=0.01$). 

\medskip\noindent \textit{Dependencies on Auxiliary Models.} Since we typically do not know the true behavior policy, we need to estimate it from data \citep{hanna2019importance,chen2019multiple}. This can be obtained as either $\hat{\pi}_b(a|s) = \frac{\operatorname{count}(s,a)}{\operatorname{count}(s)}$ for discrete states, or for continuous states, by fitting a multiclass classification model that predicts the action distribution of the behavior policy given a state, trained using the softmax cross-entropy loss \citep{torabi2018cloning}. 

%A common assumption is that the validation data were generated using a single behavior policy \citep{raghu2017continuous,komorowski2018AI_Clinician} even though it may be hard to verify in real problems (e.g., when historical data are generated by different clinicians).
%\footnote{In experiments we will explore settings when this assumption is violated, i.e., when data follows a mixture of multiple behavior policies but our $\hat{\pi}_b$ estimation assumes a single behavior policy. }

\medskip\noindent \textit{Computational Cost.} WIS requires fitting a single model $\hat{\pi}_b$ over all observed state-action pairs $\{\mathbf{x}(s_i), a_i\}_{i=1}^{N}$ and performing one inference run on all observed states $\{\mathbf{x}(s_i)\}_{i=1}^{N}$, resulting in $O(N)$ complexity for both fitting and inference. Importantly, the computational cost can be considered amortized since it does not depend on $K$, the number of candidate policies.\footnote{To be precise, applying WIS on each policy will require new importance ratios to be computed, but the cost of such arithmetic operations is minimal compared to performing fitting/inference of a parameterized model (e.g., neural networks).} This allows us to apply WIS to a large candidate set with relatively little increase in computation.

\subsubsection{Approximate Model (AM)}

In the online setting, we evaluate a policy by running it in the actual environment. While we do not have access to the true environment in the offline case, we can build an approximate model of the environment from $\mathcal{D}$.  AM is a model-based approach that estimates the policy value by first learning a model of the environment and then performing policy evaluation using the learned model \citep{jiang2016doubly,voloshin2019OPE}. As the name suggests, AM is only an approximation but may be sufficient for providing a reasonable estimate of the policy value. 

\medskip\noindent \textit{Formalization.} To learn a set of models for AM, we need to fit the transition dynamics $\hat{p}(s'|s,a)$ and the reward function $\hat{r}(s,a)$, and if necessary, the initial state distribution $\hat{\mu}_0(s)$, and/or the termination model $\hat{p}_{\mathrm{term}}(s,a)$ for episodic problems. Given these estimated parameters of the environment, we can estimate the value functions $\widehat{V}_{\AM}$ and/or $\widehat{Q}_{\AM}$ by performing analytic or iterative policy evaluation for the given policy (only applicable to the tabular setting, i.e., discrete state and action spaces), or use the learned model as a simulator to obtain average returns over Monte-Carlo rollouts \citep{jiang2016doubly,voloshin2019OPE}. The final policy value can be calculated as the average value of initial states (or as the expectation over the estimated initial state distribution):
\begin{align*}
    \hat{v}_{\AM}(\pi) &= \frac{1}{m} \sum_{j=1}^{m} \widehat{V}^{\pi}_{\AM}(s_1^{(j)}) \textrm{ or } \sum_{s_1\in\mathcal{S}} \hat{\mu}_0(s_1) \widehat{V}^{\pi}_{\AM}(s_1), \\
    \text{ where } \widehat{V}^{\pi}_{\AM}(s) &= \hspace{1em}\underset{\mathclap{\substack{a_t \sim \pi(\cdot|s_t) \\ r_t \sim \hat{r}(s_t,a_t) \\ s_{t+1} \sim \hat{p}(\cdot|s_t,a_t)}}}{\mathbb{E}} \hspace{2em} \left[ \sum_{t=1}^{H} \gamma^{t-1}r_t \ \Big| \ s_1 = s \right] .
\end{align*}

\noindent \textit{Hyperparameters.} When Monte-Carlo rollouts are used for evaluation, in lieu of a termination model (which itself requires specifying a threshold on the predicted termination probability), it is more common to directly set the maximum length of rollout $H$, which corresponds to the evaluation horizon. One possible heuristic is to set $H$ based on the typical episode length $\tilde{L}$ in historical data. However, if we expect the learned policy to outperform the behavior policy in a particular way and produce longer or shorter episodes (e.g., keeping patients healthy for a longer time, or discharge patients alive earlier), it may be more appropriate to adjust $H$ depending on the specific setting. When analytic or iterative policy evaluation is used (for tabular settings), we consider $H\to\infty$. 

\medskip\noindent \textit{Dependencies on Auxiliary Models.} As mentioned above, AM depends on the following auxiliary models: $\hat{p}$, $\hat{r}$, and possibly $\hat{\mu}_0$, $\hat{p}_{\mathrm{term}}$. 
\begin{itemize}[noitemsep]
    \item For discrete state spaces, these quantities can be found using maximum likelihood estimation: $\hat{p}(s'|s,a) = \frac{\operatorname{count}(s,a,s')}{\operatorname{count}(s,a)}$, $\hat{r}(s,a) = \frac{\sum_{i=1}^{N}r_i\mathbbm{1}_{(s_i=s, a_i=a)}}{\operatorname{count}(s,a)}$, $\hat{\mu}_0(s) = \frac{\operatorname{count}(s_1=s)}{m}$ (the termination model can be considered part of the transition model).
    \item For continuous state spaces, these models can be found using supervised learning. Even though the true environment may be stochastic and one could model the full probability distributions of transitions and rewards in low-dimensional settings (e.g., Gaussian dynamics network \citep{torabi2018cloning,yu2020MOPO,kidambi2020MOReL}), it makes the Monte-Carlo rollouts more susceptible to noise. One could generate multiple rollouts from each state, but this requires a new hyperparameter and significantly more computation. Alternatively, it is common to impose simplifying assumptions and instead learn the following regression models \citep{voloshin2019OPE,rajeswaran2020game}: $\hat{\Delta}_s(\mathbf{x}(s),a)$ an autoregressive model to predict the expected change in state features and $\hat{r}(\mathbf{x}(s),a)$ to predict the expected immediate reward. These models can be trained with mean squared error as the loss function. Furthermore, instead of modeling the full probability distribution of initial states as $\hat{\mu}_0$, we can directly use the observed initial states as the starting states for rollouts. Termination models are not typical, and instead we use $H$ as the fixed rollout length. 
\end{itemize}

\medskip\noindent \textit{Computational Cost.} Similar to WIS, the cost of auxiliary model fitting does not depend on $K$, the number of candidate policies; only a single set of models needs to be learned using the following pattern sets: $\{\langle\mathbf{x}(s_i), a_i\rangle, \mathbf{x}(s'_i)-\mathbf{x}(s_i)\}_{i=1}^{N}$ for $\hat{\Delta}_s(s,a)$, and $\{\langle\mathbf{x}(s_i), a_i\rangle, r_i\}_{i=1}^{N}$ for $\hat{r}(s,a)$. The fitting complexity is thus $O(N)$. However, unlike WIS, the inference cost for AM scales linearly with $K$, because each candidate policy requires a different set of rollouts. The overall inference cost is $O(HmK)$, for advancing $H$ steps from the $m$ initial states following each of the $K$ candidate policies, assuming they are all deterministic. The inference cost for a set of probabilistic policies is approximately $O(|\mathcal{A}|^H HmK)$ in the worst case, because we need to consider all possible action sequences with nonzero probability following each initial state and weight them appropriately.

\subsubsection{Fitted Q Evaluation (FQE)}

Instead of reweighting the observed episodes (WIS) or building an approximate environment model (AM), we can algorithmically summarize the performance of the evaluation policy using the Q-function, a mapping from state-action pairs to the expected cumulative reward. This can be done by running FQE, an off-policy value-based algorithm that directly learns $\widehat{Q}^{\pi}_{\FQE}(s,a)$, the Q-function of $\pi$, from historical data. 

\medskip\noindent \textit{Formalization.} FQE (\cref{alg:FQE}) is a value-based, temporal difference algorithm \citep{le2019FQE} and is closely related to expected SARSA with respect to a fixed policy for policy evaluation \citep{van2009expSARSA}. Each iteration of the algorithm applies the Bellman equation for $Q^{\pi}$ to compute the bootsrapping targets for all transitions from $\mathcal{D}$ using the previous estimates (line 4) and then applies function approximation via supervised learning (lines 5-6). The algorithm outputs $\widehat{Q}^{\pi}_{\FQE}(s,a)$, the estimated Q-function of $\pi$. The final value estimate can be calculated as the average value for the initial states in $\mathcal{D}$: 
\[ \hat{v}_{\FQE}(\pi) = \frac{1}{m} \sum_{j=1}^{m} \widehat{V}^{\pi}_{\FQE}(s_1^{(j)}), \text{ where } \widehat{V}^{\pi}_{\FQE}(s) = \sum_{a\in\mathcal{A}} \pi(a|s) \widehat{Q}^{\pi}_{\FQE} (s, a). \]

\noindent \textit{Hyperparameters.} Similar to AM, FQE also has a hyperparameter $H$, the number of FQE iterations, which corresponds to the evaluation horizon (the output $\tilde{Q}_{H}$ is the $H$-step Q-function of $\pi$). 

\medskip\noindent \textit{Dependencies on Auxiliary Models.} For each candidate policy $\pi_k$, we need to fit a sequence of Q-networks $\{\tilde{Q}_h\}_{h=1}^{H}$ (\cref{alg:FQE}, line 6). 

\medskip\noindent \textit{Computational Cost.} Unlike WIS and AM where the number of auxiliary models does not depend on the number of candidate policies (the fitting costs are amortized), FQE scales linearly with $K$ because a different set of models is needed for each $\pi_k$. Furthermore, due to bootstrapping where the target value of the current iteration depends on the result of the previous iteration (\cref{alg:FQE}, line 4), each FQE iteration involves both inference and fitting of $\tilde{Q}$, resulting in a total fitting complexity of $O(HNK)$ for both deterministic and probabilistic candidate policies. Notably, the process of applying FQE to each policy cannot be parallelized due to this dependency. However, multiple policies can be evaluated in parallel. To obtain the final value estimate, each $\tilde{Q}_H$ network is applied to the initial states from $m$ episodes, resulting in inference complexity of $O(mK)$. Overall, FQE is much more expensive compared to WIS/AM in terms of computational cost (and more expensive than policy learning) and can take much longer to run.

\subsubsection{Weighted Doubly Robust (WDR) Estimator} \label{sec:WDR}

WDR is a hybrid method that combines importance sampling from WIS with value estimators $\widehat{V}^{\pi}$ and $\widehat{Q}^{\pi}$ obtained from AM or FQE \citep{jiang2016doubly,thomas2016MAGIC}. In addition to reweighting the observed returns (as in WIS), WDR leverages the value estimates by incorporating them at every time step with the hope of reducing the overall variance. 

\medskip\noindent \textit{Formalization.} We consider two versions of this estimator: WDR-AM and WDR-FQE. The estimator is defined as follows (with $\rho_{1:0} = 1$): 
\begin{align*}
    \hat{v}_{\WDR}(\pi) &= \frac{1}{m} \sum_{j=1}^{m} G_{\WDR}(\tau_j; \pi), \\
    \text{ where } G_{\WDR}(\tau_j; \pi) &= \sum_{t=1}^{L^{(j)}} \left[ \frac{\rho_{1:t}^{(j)}}{w_{t}} \gamma^{t-1} r_t^{(j)} - \left( \frac{\rho_{1:t}^{(j)}}{w_{t}} \widehat{Q}^{\pi}(s_t^{(j)}, a_t^{(j)}) - \frac{\rho_{1:t-1}^{(j)}}{w_{t-1}} \widehat{V}^{\pi}(s_t^{(j)}) \right) \right] . 
\end{align*}
WDR is often assumed to be better than WIS and FQE/AM used by themselves because in theory, it is a consistent low-variance estimator if either importance ratios (i.e., behavior policy) or the value function estimates are properly specified (hence the name ``doubly robust''). However, more recent empirical work has shown that WDR methods can be less accurate and have larger errors than its constituent OPE methods due to various reasons, such as stochastic transitions/rewards or high variance in importance ratios \citep{jiang2016doubly,thomas2016MAGIC,voloshin2019OPE}. 

\medskip\noindent \textit{Hyperparameters.} WDR has the hyperparameters of its constituent parts (both $\varepsilon$ and $H$). 

\medskip\noindent \textit{Dependencies on Auxiliary Models.} Beyond the models from WIS/AM/FQE, the two WDR approaches do not rely on any additional auxiliary models. 

\medskip\noindent \textit{Computational Cost.} There is no additional computation needed for model fitting. However, there are some notable differences in terms of inference complexity because WDR requires estimates of both $Q$ and $V$ for all $N$ state-action pairs, rather than the $m$ initial states. 
\begin{itemize}[noitemsep,parsep=1pt]
    \item AM estimates $Q$/$V$ by performing Monte-Carlo rollouts. Given the softened policy $\tilde{\pi}$, in order to obtain $\widehat{V}^{\tilde{\pi}}_{\AM}(s) = \sum_{a\in\mathcal{A}} \tilde{\pi}(a|s)\widehat{Q}^{\tilde{\pi}}_{\AM}(s,a)$ for each state, we need to perform rollouts that consider all possibilities of the future following $\tilde{\pi}$, which is exponential in the rollout length $H$. This results in $O(|\mathcal{A}|^{H} HNK)$ inference complexity to compute WDR-AM exactly, since the exponential rollouts need to be done for all $N$ transitions and all $K$ candidate policies, which is prohibitively expensive. If the evaluation policy is deterministic, then a slightly more practical solution is to use the softened $\tilde{\pi}$ to calculate importance ratios, but use the original $\pi$ to perform rollouts, resulting in $O(HNK)$ inference complexity, which is still relatively expensive. In light of these computation requirements, we omitted WDR-AM from our experiments on problems with continuous state spaces in favor of WDR-FQE. % If $N \approx mL$ and $H \approx L$, then WDR-AM requires $NH \approx mL^2$ which is approximately quadratic computation per episode, which quickly becomes infeasible even for episodes of reasonable lengths. % 
    \item Since FQE already learns parameterized functions for $\widehat{Q}$ (and thus for $\widehat{V}$), we only need to perform additional inference runs using all $N$ state-action pairs as input, resulting in $O(NK)$ inference complexity. 
\end{itemize}

\vspace{-0.8em}
\begin{table}[h]
    \centering
    \caption{Qualitative comparison of OPE methods when used for model selection over $K$ candidate policies, including: any additional hyperparameters that must be set \textit{a priori}, dependencies on additional estimated quantities (discrete state space) or auxiliary supervised models (continuous state space), and approximate computational complexity. For hybrid approaches (WDR), the listed complexity estimates are for additional computations not already done by the constituent OPE estimators. We assume the dataset consists of $m$ episodes and $N$ transitions, and all candidate policies are deterministic. $\varepsilon$ is the policy softening hyperparameter and $H$ is the length of the evaluation horizon. Other notation are described in \cref{sec:applyOPE}. A more detailed breakdown can be found in \cref{appx:OPE-compute}. }
    \label{tab:OPE-summary}
    \scalebox{0.81}{
    \begin{tabular}{cccccc}
    \toprule
    \multirow{2}{*}{\textbf{OPE}} & \multirow{2}{*}{\textbf{Hyperparameters}} & \multirow{2}{*}{\textbf{Dependencies}} & \multicolumn{2}{c}{\textbf{Complexity}} \\
    &&& Fitting & Inference \\
    \midrule
    WIS & $\varepsilon$ & $\hat{\pi}_b$ & $O(N)$ & $O(N)$ \\
    AM & $H$ & $\hat{p}$, $\hat{r}$ & $O(N)$ & $O(HmK)$ \\
    FQE & $H$ & $\{\tilde{Q}_h\}_{h=1}^{H}$ for each $\pi_k$ & $O(HNK)$ & $O(mK)$ \\
    WDR-AM & $\varepsilon$; $H$ & WIS + AM & WIS + AM & $O(|\mathcal{A}|^{H}HNK)$\\
    WDR-FQE & $\varepsilon$; $H$ & WIS + FQE & WIS + FQE & $O(NK)$ \\
    \bottomrule
    \end{tabular}
    }%
\end{table}

\subsection{Combining Multiple OPE Estimators for Model Selection} \label{sec:combine-OPEs}

While every OPE outputs an estimated value of the evaluation policy, each method does so in a different way, e.g., by estimating the behavior policy, estimating the environment model, or directly estimating the $Q$ function. Based on past OPE literature (that focused on accuracy in estimating policy value \citep{voloshin2019OPE,fu2021benchmarks}) and our analyses on computational costs (\cref{tab:OPE-summary}), we note that WIS/AM are more efficient but tend to be less accurate, whereas FQE is less efficient but tends to be more accurate. Given a large candidate set, it might not be worthwhile applying expensive OPE methods on low-quality policies. With this insight, we propose a simple, practical procedure for combining two OPE estimators.

\medskip\noindent \textbf{Two-stage selection.} Instead of selecting the best policy based on ranking from a single OPE method, we do this in two stages: in the first stage, a computationally efficient OPE (e.g., WIS) is applied to the entire candidate set in order to identify a promising subset (e.g., the top percentiles); in the second stage, a more accurate OPE (e.g., FQE) is used to rank within the pruned subset. The performance of the two-stage selection procedure depends on the initial subset size $\alpha$ as well as choices of the two OPE estimators. This additional hyperparameter $\alpha$ needs to be prespecified; in practice, this can be determined based on available computational resources. $\alpha$ dictates the total computational cost as well as to what extent the final selected policy depends on each of the two OPE estimators: $\alpha=1$ defaults to the first OPE estimator, while $\alpha=K$ defaults to the second OPE estimator. See \cref{appx:two-stage} for a detail discussion and theoretical analyses. 
% In this way, we can reduce the overall computational cost while potentially maintaining the performance the more expensive / more accurate OPE method. 

\medskip As baselines for comparison, we consider \textbf{average score} and \textbf{average ranking}, two naive ways of combining OPE scores that could take advantage of multiple OPE methods, but do not benefit from the computational savings as the two-stage approach and may be susceptible to outliers due to high variance of certain OPE estimators. We also compare to WDR (\cref{sec:WDR}), a hybrid method that combines WIS with AM/FQE.

%  and the potential trade-off between accuracy and computational costs
% \tablefootnote{We make the simplifying assumption that model inference has the \textit{same cost} as model fitting. In reality, for neural network function approximators, model inference only involves one forward pass per sample, whereas model fitting may involve multiple forward/backward passes per sample. }

% This results in different computational costs when applied to a set of candidate policies 
% In certain settings it might be easier to avoid misspecification for a particular method: e.g., WIS under a single behavior policy or AM under deterministic MDPs. Thus, it can be beneficial to consider multiple OPE metrics when performing evaluation or model selection \citep{gottesman2019combining}. 

% The two approaches above require us to compute all OPE validation scores for every candidate policy. 

% we outline a few different strategies to combine multiple OPE estimators . 

%  that incur computation even more expensive than the policy learning process

\section{Experimental Setup}
To test the utility of different OPE methods in model selection for offline RL, we performed a series of empirical experiments and evaluated how well OPE estimates of validation performance can help in selecting policies among a candidate set. All of our experiments rely on the following steps: (i) generating historical data $\mathcal{D}_{\mathrm{train}}$ and $\mathcal{D}_{\mathrm{val}}$ from a simulator, (ii) applying an RL algorithm on $\mathcal{D}_{\mathrm{train}}$ and obtaining the set of candidate policies by using different hyperparameters during training, (iii) using OPE and $\mathcal{D}_{\mathrm{val}}$ to estimate validation performance for each candidate policy, and (iv) quantifying the ranking quality and regret of OPE estimators with respect to true values of candidate policies. We discuss the details of these steps below. %For reproducibility, upon publication, all code will be released via a public repository.

% To simulate an offline RL problem, we structure our experiments into three steps: (i) generate historical using a simulator, (ii) apply an RL learning algorithm, (iii) compute the learning curves across FQI iterations using offline metrics. 

% FQI on $\mathcal{D}$ with some chosen function class $\mathcal{F}$

% \red{OUTDATED} We seek to empirically evaluate to what extent metrics based on offline data can guide us in reducing the negative effects of unlearning. While early stopping might seem a very natural solution, past work has usually assumed access to the environment and used an online oracle to guide early stopping \citep{fu2019bottleneck}. This work provides one of the first empirical evaluations of early stopping using offline data alone. Compared to \citet{paine2020hyperparameter} which considered the general problem of hyperparameter selection in deep offline RL, we focus on choosing the training iterations (early stopping) at a more granular level to address unlearning rather than optimizing for architecture hyperparameters. We also apply OPE methods other than FQE \citep{le2019FQE}, which was used exclusively in \citet{paine2020hyperparameter}. 

\subsection{Environment \& Dataset Generation}
While this work focuses on the offline setting, we consider a simulated problem because we need to evaluate the quality of policy ranking according to each OPE with respect to the true policy performance. We use the sepsis simulator adapted from \citet{oberst2019gumbel} and \citet{futoma2020popcorn}, which is modeled after the physiology of patients with sepsis. More details of the simulation setup can be found in \cref{appx:sepsisSim} and the source code; below we provide a brief description. The underlying patient state consists of 5 discrete variables: a binary indicator for diabetes status, and 4 ordinal-valued vital signs (heart rate, blood pressure, oxygen concentration, glucose). We consider two state representations: a discrete state space with $|\mathcal{S}| = 1,440$, and a continuous state space where each state corresponds to a feature vector $\mathbf{x}(s) \in \{0,1\}^{21}$ that uses a one-hot encoding for each underlying variable. There are 8 actions based on combinations of 3 binary treatments: antibiotics, vasopressors, and mechanical ventilation, where each treatment can affect certain vital signs and may raise/lower their values with pre-specified probabilities. A patient is discharged only when all vitals are normal and all treatments have been stopped; death occurs if 3 or more vitals are abnormal. Rewards are sparse and only assigned at the end of each episode, with $+1$ for survival and $-1$ for death, after which the system transitions into the respective absorbing state. For data generation, episodes are truncated at a maximum length of 20. We used a uniformly random policy to generate 10 pairs of datasets (for training and validation), each with large number of episodes ($m=$ 10,000) and a different random seed. While this data generation process ensures sufficient exploration in the state-action space, it makes certain assumptions that might not hold in practice; we explore some of these issues in \cref{sec:results-comparison}. 
% \footnote{\url{https://github.com/clinicalml/gumbel-max-scm/}}

\subsection{Implementation Details}
To learn the policies, we ran FQI (\cref{alg:FQI}) for up to 50 iterations using a neural network function approximator. We used 10,000 episodes following the uniformly random policy for both training and validation unless otherwise specified, and repeated each experiment for 10 replication runs. We followed the procedures outlined in \cref{sec:applyOPE} to learn appropriate auxiliary models (either via MLE or as individual supervised tasks) and applied OPE methods using the learned models. We used $\varepsilon=0.01$ and $H=20$ as the default OPE hyperparameters following recommendations provided in \cref{sec:applyOPE}. By default, we used neural networks consisting of one hidden layer with 1,000 neurons and ReLU activation to allow for function approximators with sufficient expressivity. We trained these networks using the Adam optimizer (default settings) \citep{adam} with a batch size of 64 for a maximum of 100 epochs, applying early stopping on $10\%$ ``validation data'' (specific to each supervised task) with a patience of 10 epochs. We minimized either the mean squared error (MSE) for regression tasks (each iteration of FQI/FQE, dynamics/reward models of AM), or the cross-entropy loss with softmax activation for classification tasks (behavior policy for WIS). For FQI/FQE, we also added value clipping (to be within the range of possible returns $[-1,1]$) when computing bootstrapping targets to ensure a bounded function class and encourage better convergence behavior \citep{mnih2015atari}.

% We also include a linear regressor with the same feature vectors in certain cases, but due to its limited expressivity and poor performance, we show those results in the appendix. 

\subsection{Evaluation}

Given our goal is to select the best (or a good) policy from the candidate set, the OPE validation scores does not necessarily need to match exactly with true policy performance; they only need to correlate well \citep{irpan2019OPC}. Thus, we do not present mean squared error (MSE), a common metric used to evaluate OPE methods \citep{voloshin2019OPE}. Instead, we report the following quantitative metrics \citep{paine2020hyperparameter,fu2021benchmarks}:

\begin{itemize}[noitemsep,topsep=0pt]
    \item Spearman's rank correlation $\rho$ between OPE scores and ground-truth policy values. This measures the overall ranking quality, i.e., how well an OPE method can rank policies according to their true performance from high to low. 
    \item Regret, defined as \, $\max\limits_{\mathclap{\scriptscriptstyle k=1 \cdots K}} v(\pi_k) - v(\pi_{k^*})$ where $k^* = \operatorname{arg}\max\limits_{\mathclap{\scriptscriptstyle k'=1 \cdots K}} \hat{v}(\pi_{k'})$ corresponds to the candidate policy with the best OPE validation score. 
    %$v(\pi_{k^*}) - v(\hat{\pi}^*)$, where $k^* = \argmax_{k=1 \cdots K} v(\pi_k)$ corresponds to the best policy among the candidate set. 
    This measures how far the identified ``best'' policy is from the actual best policy among the candidate set. We also report \mbox{top-$n$} regret (i.e., regret@$n$), which is the smallest regret for the $n$ policies with the highest OPE scores.%\footnote{Note that \textsf{k} is not to be confused with $k$, the index of candidate policies (stylized differently).} 
    % \item Suboptimality, defined as $v(\pi^*) - v(\hat{\pi}^*)$, measures how far the identified ``best'' policy is from the true optimal policy.
\end{itemize}
We evaluated policies analytically using the matrix inversion method and ground-truth MDP model parameters $p$, $r$, and $\mu_0$ from the sepsis simulator to obtain the true value $v(\pi)$. While regret ultimately determines how well model selection can identify the best policy, we consider rank correlation and regret@$n$ as they indicate to what extent an OPE method can be used for identifying the promising subset in the two-stage selection. 

% We want to know how well early stopping based on a metric can help find a policy with better performance. We start by plotting the learning curves and visually inspect the qualitative similarity in their shapes/trends compared to the ground truth curve. 
    % \item Mean-squared error (MSE). This is commonly used to evaluate OPE methods \citep{voloshin2019OPE} and is defined as $\frac{1}{K}\sum_{k=1}^{K}(\hat{V}(\pi_k) - J(\pi_k; \mathcal{M}))^2$. \red{no MSE in main results}
    % \item Policy performance, $J(\pi_{k^*}; \mathcal{M})$ evaluated in the true MDP. \red{remove performance}
% While having the overall learning curve trend match the ground truth is preferable, it is more important that the chosen ``best'' policy $\pi_{k^*}$ after applying early stopping achieves good performance. We computed the following quantitative measures on $\pi_{k^*}$: 

\section{Experiments \& Results} \label{sec:results}

In this section, we present empirical experiments that compare the utility of OPE methods for model selection in offline RL. In \cref{sec:results-arch}, we first consider the effectiveness of OPE methods in selecting neural network architectures and training hyperparameters. Specifically, we quantitatively measure the overall ranking quality, as well as the regret of selected policies. We also compare strategies for combining multiple OPE estimators, including naive averaging (of scores or ranking), WDR, as well as the proposed two-stage selection. To test the robustness of each OPE method in different settings and to enhance the generalizability of our findings, in \cref{sec:results-comparison}, we explore robustness to the additional OPE hyperparameters as well as various data conditions with different sample sizes or different behavior policies. The main experiments focus on the continuous state space setting; results for the discrete state space setting can be found in \cref{appx:early-stopping}.

% Note that WDR-AM is omitted for experiments with continuous state space due to its prohibitive computational requirement. 
% \emph{Depending on the claim you make in the paper, different components may be important for this section.}

\subsection{Model Selection of Neural Architectures \& Training Hyperparameters} \label{sec:results-arch}

When a function approximator (such as neural networks) is used, there are many hyperparameters related to architectures and training procedures that affect model complexity and can lead to overfitting. In this experiment, we consider learning policies on the sepsis simulator with a wide range of hyperparameters (\cref{tab:sepsis-HP-options}). Based on the hyperparameter combinations, we trained a total of 96 Q-networks using FQI on the training set, corresponding to 96 candidate policies, and then evaluated each candidate policy using OPE on validation data. 
%The experiment is repeated for 5 times, each with a different replication of training and validation data. 

\begin{table}[h]
    \centering
    \caption{Hyperparameter values considered for neural FQI on the sepsis simulator datasets. }
    \label{tab:sepsis-HP-options}
    \scalebox{0.9}{%
    \begin{tabular}{ll}
    \toprule
        \textbf{Hyperparameter} & \textbf{Values} \\
        \midrule
        Hidden layers & \{1, 2\} \\
        Hidden units & \{100, 200, 500, 1000\} \\
        Learning rate & \{1e-3, 1e-4\} \\
        FQI training iterations & \{1, 2, 4, 8, 16, 32\} \\
        \bottomrule
    \end{tabular}
    }
\end{table}

\subsubsection{Comparison Across OPE Methods}

To measure the ranking quality of each OPE method, we first inspect the scatter plots of OPE scores and true policy values (\cref{fig:sepsis-exp-HP} top). We quantify ranking quality using Spearman's rank correlation $\rho$ (reported as the means and standard deviations over 10 runs). In addition to overall ranking quality, we also measure regret@1, regret@5, and regret@10 to quantify how well model selection can help us choose the best (or a good) policy among the candidate set (\cref{fig:sepsis-exp-HP} bottom). 

\begin{figure}[h]
    \centering
    \includegraphics[scale=0.58]{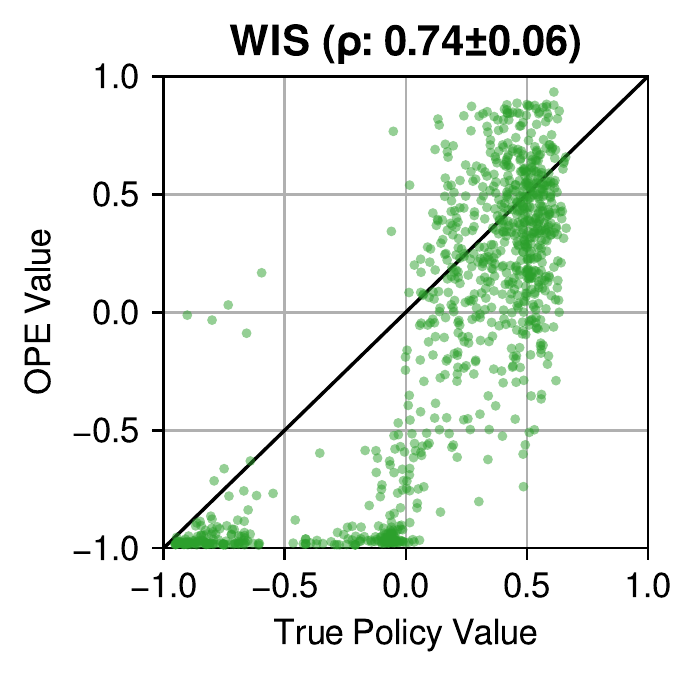}
    \includegraphics[scale=0.58]{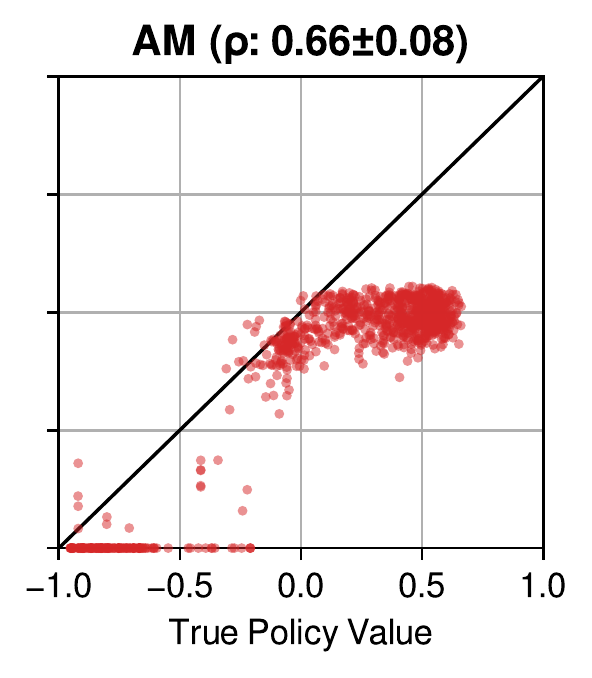}
    \includegraphics[scale=0.58]{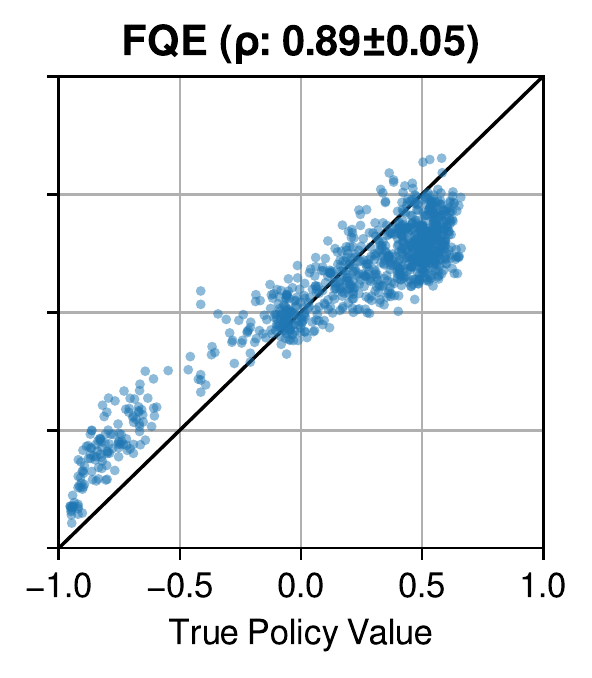}
    \includegraphics[scale=0.58]{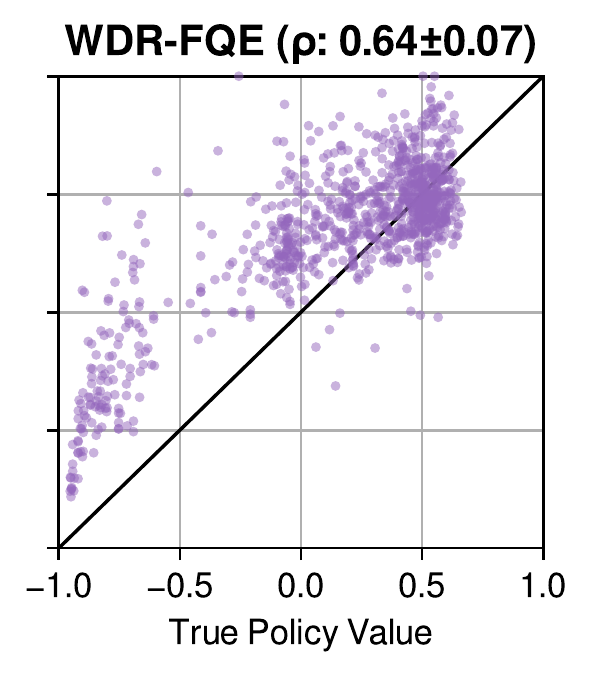}
    \includegraphics[scale=0.7,trim={0.5cm 0 0 -0.5cm }]{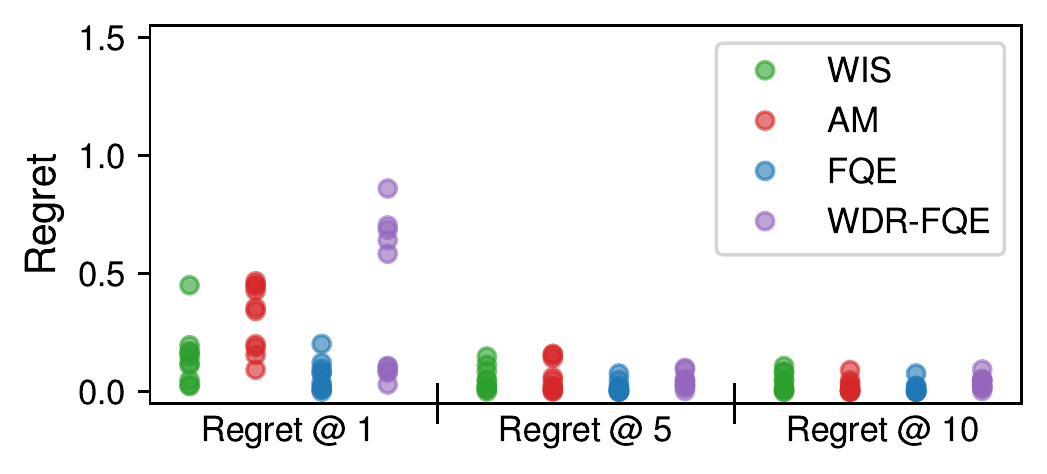}
    \caption{Top -- Scatter plots comparing true policy value with OPE validation score. The results from 10 runs (each with 96 points) are overlayed in each OPE subplot. All values are clipped to the range of possible returns $[-1,1]$ for clarity of visualization. Spearman's rank correlations are listed in the title of each subplot (mean $\pm$ std over 10 runs). FQE achieves the highest $\rho$ and produces the best ranking over candidate policies. Bottom -- Top-$n$ regret of different OPE methods in model selection compared to the best candidate policy. While the best candidate policy does not always have the highest OPE validation score, it is frequently among those with the top-5/top-10 validation performance. }
    \label{fig:sepsis-exp-HP}
\end{figure}

\paragraph{Results \& Discussion.} We observe that FQE produces the best ranking, with the highest $\rho = 0.89$ and points lying closest to the 45-degree line. WIS scores have a large variance and span almost the entire $[-1,1]$ range even for policies with a true value $>0$, but there appears to be good separation between policies with value $>0$ and those $<0$, leading to a relatively high $\rho = 0.74$. AM consistently underestimates the policy value, assigning scores close to $0$ for almost all policies, yet it maintains a reasonable overall ranking with $\rho = 0.66$. The hybrid method WDR-FQE, though commonly assumed to outperform its constituent OPE estimators (in terms of MSE), actually produces worse ranking than both WIS and FQE, resulting in a lower $\rho = 0.64$. The poor accuracy (contributing to poor ranking) of hybrid methods has also been observed for other RL tasks in past work \citep{voloshin2019OPE,fu2021benchmarks} and is likely due to various factors including limited data, environment stochasticity, estimated behavior policy, and model misspecification. We observe similar trends in regret@1 where FQE and WIS consistently achieves the lowest regret, while WDR-FQE sometimes leads to a very high regret ($>0.5$). However, all methods achieve near-zero regret for regret@5 and regret@10 and in general can identify the best candidate policy in the top-5/top-10 set. We also note that validation scores based on FQI value estimates or the temporal difference error produce poor ranking with $\rho<0.3$ and high regret (\cref{appx:HP-scatter}), highlighting the need to use OPE for estimating validation performance. 

\subsubsection{Combining Multiple OPE Estimators}

Given that WIS and FQE performed well, we explored various strategies of combining the two methods, in addition to WDR-FQE. As described in \cref{sec:combine-OPEs}, we apply the proposed two-stage approach where WIS is used to select a promising subset (containing policies with the top WIS scores). In practice, the subset size $\alpha$ needs to be pre-determined (for example, based on computational constraints), but here we used a range of different sizes and explored their effect. We also consider two naive approaches -- average score and average ranking of WIS and FQE -- as baselines. We computed regret and repeated the experiments for the same 10 replication runs as before (\cref{fig:sepsis-exp-HP-combine} left). 

\paragraph{Results \& Discussion.} Simple strategies such as average score and average ranking outperform the more sophisticated WDR-FQE, achieving lower regret compared to WIS (and comparable regret with FQE). However, these approaches rely on computing both OPE estimators on the entire candidate set and are thus computationally expensive. The two-stage approach achieves comparable or lower regret than FQE across different subset sizes, giving the lowest regret at a subset size of 24 (and 48). Moreover, with minimal (but arguably necessary) additional computation compared to using WIS alone, we can achieve lower regret using the two-stage approach compared to using FQE alone, with potentially greater savings in computational costs (\cref{fig:sepsis-exp-HP-combine} right). 

\begin{figure}[h]
    \centering
    \begin{tabular}{cc}
    \includegraphics[scale=.725,valign=m]{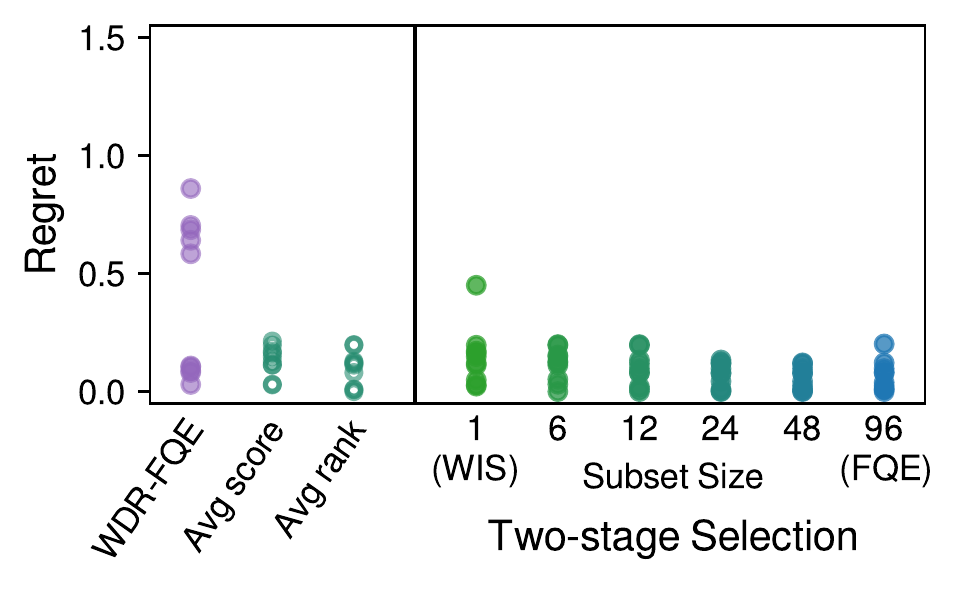}
    &
    \includegraphics[scale=.725,valign=m]{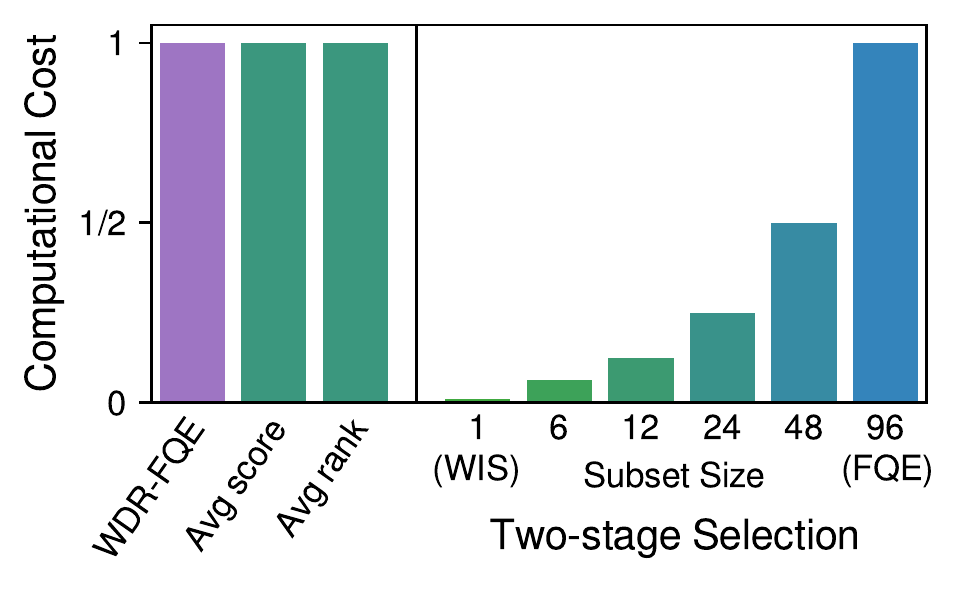}
    \end{tabular}
    \caption{Comparison of regret (left) and relative computational cost (right) of different strategies of combining WIS and FQE for model selection. The computational costs are normalized with respect to a complete FQE evaluation over the entire candidate set ($K=96$) and are for illustration purposes only. Two-stage selection with an intermediate subset size (e.g., 24) provides the best balance between performance and computational costs. }
    \label{fig:sepsis-exp-HP-combine}
\end{figure}

\subsection{Sensitivity Analyses} \label{sec:results-comparison}
In the experiments above, we made several assumptions about the validation data: (i) we used default auxiliary hyperparameters, (ii) we assumed a uniformly random behavior policy, and (iii) our validation data had a large sample size. These assumptions might not hold in practice. In this section, we vary the experimental conditions along these dimensions to test the generalizability of our conclusions. For these experiments, we kept the training set and training procedures the same throughout this comparison to generate the same candidate set for each run, so that the only difference comes from different validation data. We also kept the same neural network architecture and training procedures for auxiliary models. These experiments are repeated for the same 10 replication runs. We focus our results on the three non-hybrid OPE methods (WIS, AM, and FQE) as well as the proposed two-stage approach combining WIS with FQE with a subset size of 24. 
% because WDR resulted in large variances and heavily skewed the plots.
% (results for WDR-AM and WDR-FQE can be found \cref{appx:results})
Additional sensitivity analyses for both discrete and continuous state spaces can be found in \cref{appx:early-stopping}. 

\begin{figure}[h]
    \centering
    \includegraphics[scale=0.7,trim={1.cm 0 0 0}]{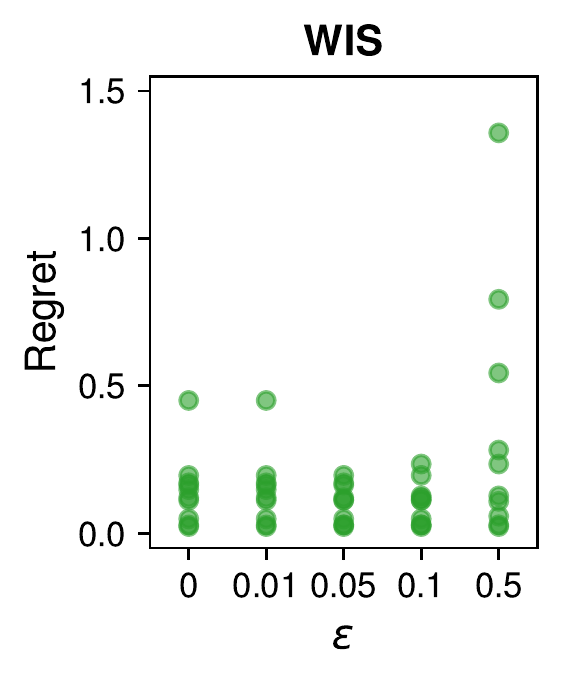}  \includegraphics[scale=0.7,trim={1.5cm 0 0 0},clip]{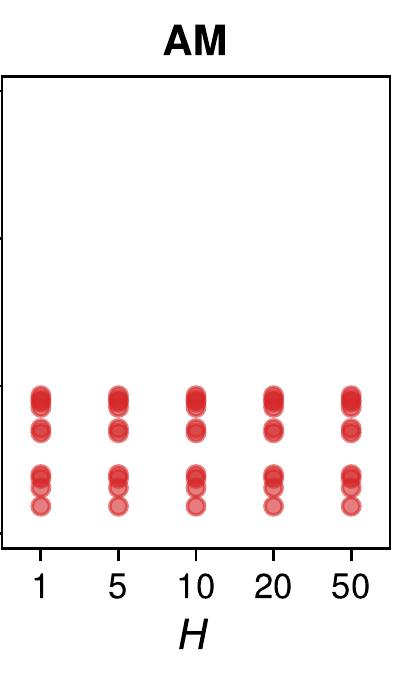} \includegraphics[scale=0.7,trim={1.5cm 0 0 0},clip]{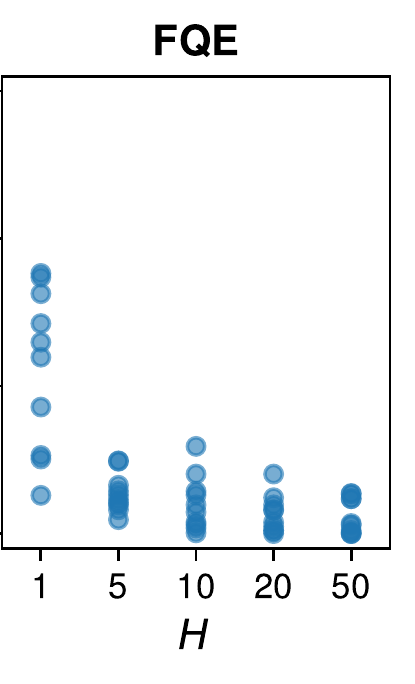} 
    \caption{Regret comparisons for model selection with different auxiliary OPE hyperparameters. AM is the most robust to $H$ but underperforms the other two OPE methods. WIS and FQE produce low regret given reasonable $\varepsilon$ or $H$ values. }
    \label{fig:sepsis-exp4}
\end{figure}

\subsubsection{Sensitivity to OPE Hyperparameters} \label{sec:results-HP}

Since each OPE method has its own auxiliary hyperparameters, it is important to understand how they may affect OPE validation ranking quality, especially as there is no effective way of selecting them. In this experiment, we focus on the hyperparameters that must be set \textit{a priori}, namely: the policy softening parameter $\varepsilon$ for WIS and the evaluation horizon $H$ for AM/FQE. We vary these hyperparameters in the reasonable ranges and measure the corresponding regret following model selection. For continuous state space models, even though the auxiliary models contain hyperparameters themselves (e.g., the neural network architecture), these can be tuned via standard model selection within each auxiliary supervised learning task; for consistency, we kept these hyperparameters the same throughout. 
%  and modify the evaluation problem

\paragraph{Results and Discussion.} We visualize the regret in \cref{fig:sepsis-exp4}. Among the three OPE methods, AM appears to be the most robust to different $H$ values, although the regret it obtains is higher than the other two OPE methods and does not decrease as $H$ increases. For WIS, the lowest regret is achieved at $\varepsilon=0.1$, but using $\varepsilon$ values that are too small (potential issues with effective sample size) or too large ($\tilde{\pi}$ is too far from $\pi$) led to regret with slightly larger variance. FQE achieves more stable regret at larger evaluation horizons, e.g., $H \geq 10$, that correspond to a closer approximation to a full, infinite-horizon evaluation. Although WIS and FQE are more sensitive to their auxiliary hyperparameters, the performance is optimal (or close to optimal) following existing rules of thumb and recommendations we provided in \cref{sec:applyOPE}. 

% \begin{figure}[h]
%     \centering
%     \includegraphics[width=.7\linewidth]{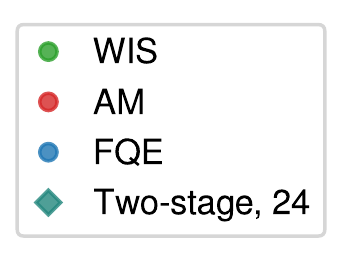}
%     \includegraphics[scale=0.8]{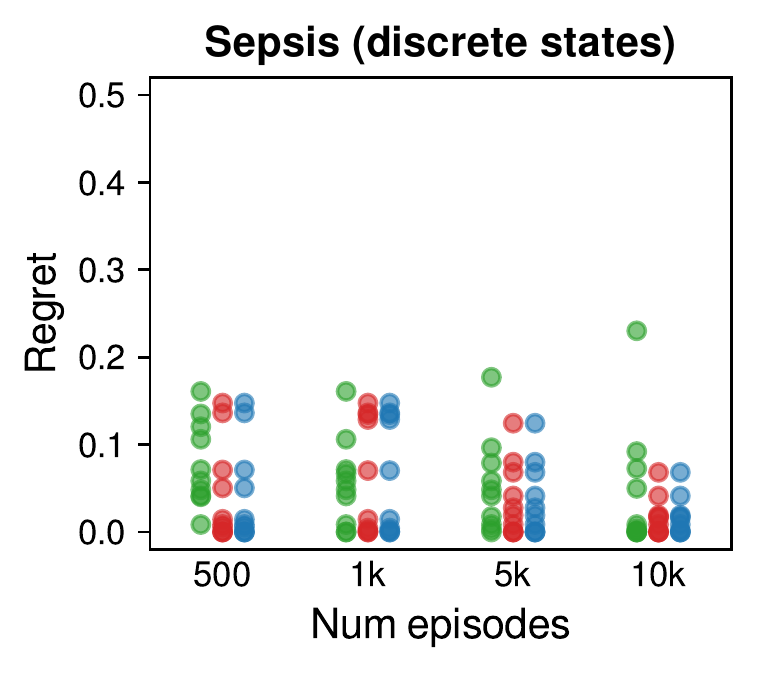} \quad \includegraphics[scale=0.8]{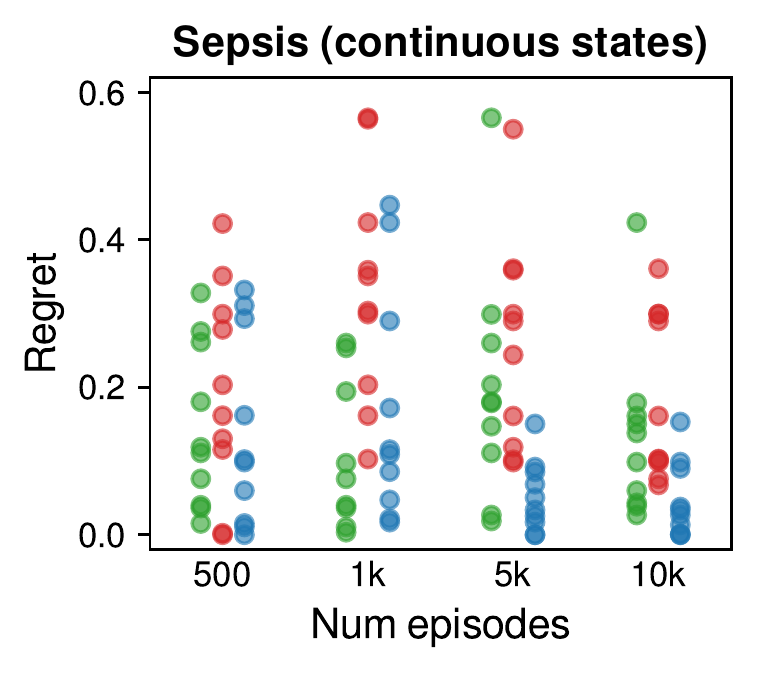}
%     \caption{OPE on varying validation sizes}
%     \label{fig:sepsis-exp2}
% \end{figure}

\subsubsection{Sample Size of Validation Data} \label{sec:results-size}
In our main experiments, we used validation data with 10,000 episodes, which is as large as the training data. In supervised learning, the size of validation data is more commonly only a fraction of the training data size. In this comparison, we reduced the amount of validation data used for OPE and measured the regret. 

\begin{figure}[h]
    \centering
    \includegraphics[valign=t,scale=0.75,trim={0 0 0 0.675cm},clip]{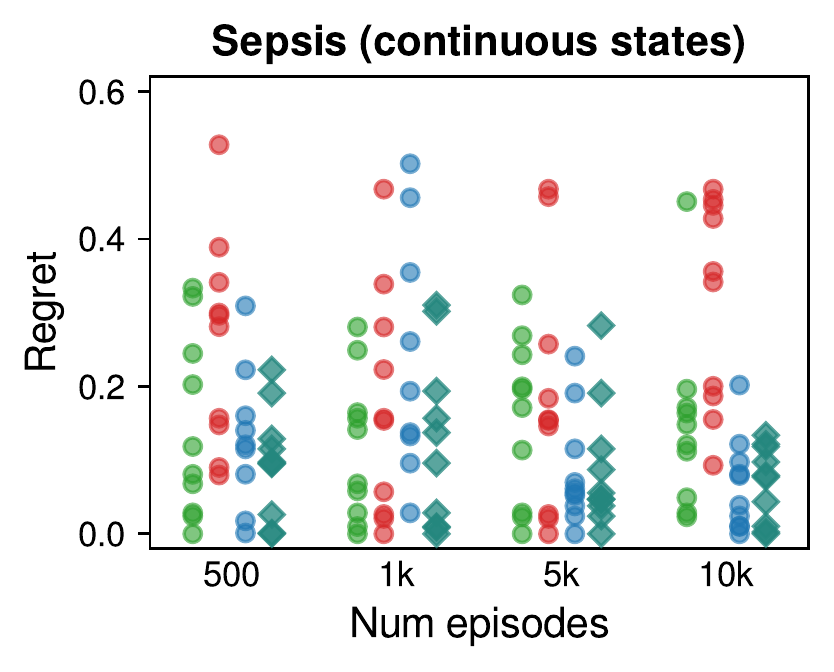} \quad
    \includegraphics[valign=t,scale=0.75,trim={0 0 0 0.675cm},clip]{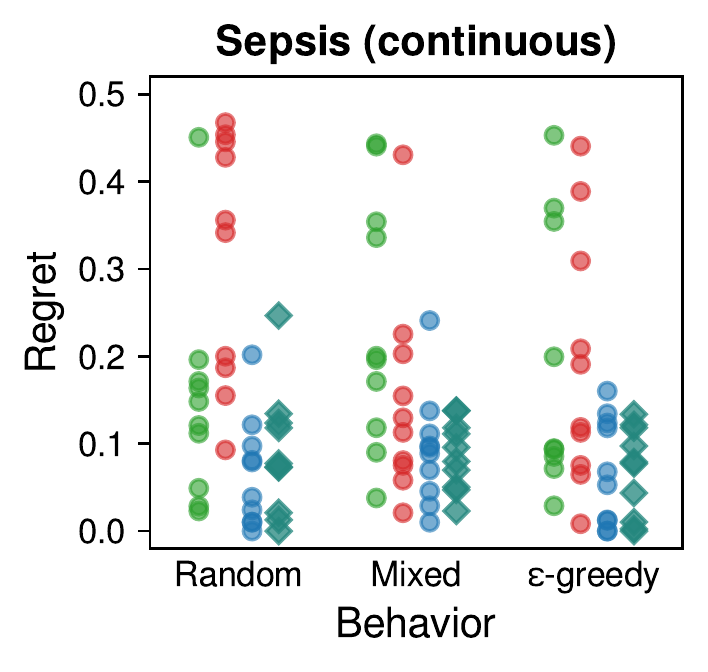} \includegraphics[valign=t,scale=.8]{figs/plot-legend.pdf}
    \caption{Regret comparison for OPE using validation data with varying sample sizes and varying behaviors. Left -- A larger validation set generally results in lower regret across all OPE methods. Right -- Less exploratory behavior can lead to higher regret. In almost all scenarios, the proposed two-stage approach is able to maintain low regret while reducing overall computational cost. }
    \label{fig:sepsis-exp2-exp3}
\end{figure}

\paragraph{Results and Discussion.}
All OPE methods tend to benefit from larger validation sizes, achieving lower regret (\cref{fig:sepsis-exp2-exp3}, left). The trend is less apparent for WIS (which generally results in noisier value estimates) and for AM (possibly due to model misspecification). The proposed two-stage selection (combining WIS and FQE with a subset size of 24) consistently matches (and sometimes slightly outperforms) FQE in terms of regret, except for very small sample sizes (e.g., 500 episodes). This could be because WIS is more susceptible to large variances due to limited sample sizes and produces less reliable pruning in the first stage. Given reasonably sized validation data (1,000 episodes or more for this problem), the proposed two-stage approach maintains performance comparable to FQE, while reducing computational costs. 

\subsubsection{Type of Behavior \& Extent of Exploration} \label{sec:results-behavior}
The behavior policies used to collect validation data can affect which part of the state-action space receives sufficient exploration and the resultant OPE model selection performance. While we previously assumed a uniformly random policy that provides sufficient exploration, this is rarely the case in real life; e.g., clinicians do not behave randomly when selecting treatments. Moreover, historical data may be generated by multiple clinicians who use different policies. Here, we examine how the behavior policy affects model selection. We first considered validation data containing 10,000 episodes following an $\epsilon$-greedy behavior policy (with $\epsilon=0.1$); though this produces near-optimal behavior, it limits the extent of exploration. We also considered data that contained 5,000 episodes of both the random policy and the $\epsilon$-greedy policy to create a mixture of behaviors (with the same total sample size of 10,000 episodes). Although these two settings violate the assumption that training data and validation data follow the same data distribution (as discussed in \cref{sec:pipeline}), the evaluation problem is still valid since all data are drawn from the same underlying MDP environment. 

% A common assumption is that the validation data were generated using a single behavior policy \citep{raghu2017continuous,komorowski2018AI_Clinician} even though it may be hard to verify in real problems (e.g., when historical data are generated by different clinicians).
%\footnote{In experiments we will explore settings when this assumption is violated, i.e., when data follows a mixture of multiple behavior policies but our $\hat{\pi}_b$ estimation assumes a single behavior policy. }

\paragraph{Results and Discussion.}
The random behavior policy produces more exploratory data compared to an $\epsilon$-greedy policy, leading to slightly lower regret in general (\cref{fig:sepsis-exp2-exp3}, right). Surprisingly, even when a mixture of multiple behavior policies is used, WIS is not heavily affected by this policy misspecification and still leads to low regret. Reassuringly, the proposed two-stage selection (with a subset size of 24) is able to maintain (or outperform) the regret achieved by FQE while reducing overall computation, even in challenging situations with less exploratory data or mixed behaviors.

% FQE and AM remain relatively stable across the different behavior settings and generally lead to the lowest regret with smaller variance. 

% Another important consideration is the size of validation set relative to the training set (10k in all cases). While in supervised learning it is typical the validation set is much smaller than the training set, in our offline RL setting, using a large validation set (e.g., as large as the training set) can help alleviate the negative effects of overfitting to the greatest extent. %This effect is also present although slightly weaker for WDR methods. 

% This tends to have overall positive effect for selecting a well performing policy, and thus a separate validation set might not be necessary. 

% Since we are uncertain the extent of implicit regularization for OPE model fitting in general, a separate validation set is still recommended (and is necessary if a tabular representation is used). 

% this allows us to use more data for training (and potentially learn better policies), it could cause the validation performance to be overconfident and lead to overfitting. 

\section{Conclusion}
Model selection for offline RL is challenging because we cannot easily obtain the equivalent of ``validation performance'' given only historical interaction data. To date, RL for healthcare research has skirted the issue of model selection by either assuming default hyperparameters or relying on ground-truth performance only obtainable with simulators. In this work, we explored a model selection framework for offline RL that uses OPE as a proxy estimate for validation performance. We provided an in-depth analysis of how this can be implemented using four popular OPE methods, detailing the practical considerations such as the reliance on auxiliary hyperparameters/models and computational requirements. On a simulated RL task of sepsis treatment, we found that FQE consistently provides the most accurate ranking over candidate policies and is the most robust to various data conditions (sample sizes and behavior policies). Given the high computational cost of FQE, we proposed a simple two-stage approach to combine two OPE estimators and demonstrated that it outperforms simple averaging or more sophisticated doubly-robust methods, while avoiding expensive computation on low-quality policies. 

There has been increasing interest on the problem of model selection for offline RL in recent years (\cref{appx:related-work}). Recently, \citet{paine2020hyperparameter} demonstrated that FQE is a useful strategy for hyperparameter selection in offline RL on several benchmark control tasks. In contrast, we compare multiple OPE estimators both qualitatively and quantitatively to address questions of practical importance for healthcare settings. Our empirical results further corroborate the findings of \citet{irpan2019OPC} and \citet{paine2020hyperparameter}, who showed that FQI values and TD errors are poor validation metrics due to overestimation of value functions during training, thus highlighting the importance and necessity of using OPE to estimate validation performance. During the preparation of this work, \citet{fu2021benchmarks} published a benchmark suite for comparing OPE methods on continuous control tasks, with Spearman's correlation as one of evaluation metrics. Similar to our empirical findings, they found FQE to consistently produce the best results, whereas other OPE approaches (including AM and WIS) varied more with respect to the specific task and modeling choice. However, their focus is on presenting a standardized benchmark to compare OPE methods, rather than carefully reflecting on the practical and implementation issues that arise when using OPE for model selection. 

We note several important limitations of this work. First, our experiments are performed on a simulated RL task because of the need to obtain true performance of learned policies. Though we explored different data conditions including behavior policies and sample sizes, our simulation might not capture all nuances in real data collection. In particular, certain properties of the simulation (discrete action spaces, short horizons, noiseless observations, no missingness, sparse terminal rewards) may limit the generalizability of our conclusions to real observational data and other RL problems. Using a simulator also does not address important issues that may arise when designing the RL problem setup based on real datasets (e.g., defining a good action space or meaningful initial states). We encourage future work to investigate more healthcare RL problems using our released code. Though the simulated sepsis task may not directly produce any clinically relevant insights for treating sepsis, we expect future work on sepsis management \citep{komorowski2018AI_Clinician,raghu2017deepRL,raghu2017continuous,killian2020representation} to build upon our lessons and improve their model selection procedure, thus enabling fairer comparisons. Second, we only considered four OPE methods that are commonly used in healthcare tasks. It could be beneficial to understand the pros and cons of other OPE methods when incorporated into the model selection framework, e.g., approaches based on marginalized importance sampling \citep{liu2018breaking,zhang2019gendice,uehara2020minimax}. Some of these techniques have been compared quantitatively in concurrent work by \citet{fu2021benchmarks} on a continuous control benchmark suite, but none of them outperformed FQE in terms of ranking quality. Third, we assumed the same validation data is used to fit the auxiliary models and calculate OPE estimates. It is possible to use two separate validation sets (one for fitting models, the other for applying OPE); we preferred our current strategy as it leverages more data for both steps and empirically produced reasonable results, though this may violate the independence assumptions needed for the theoretical guarantees of OPE \citep{jiang2016doubly}. We also did not implement additional ``tricks'' that could help improve OPE accuracy, e.g., truncated importance weights for WIS \citep{ionides2008truncated,swaminathan2015self} or more sophisticated models for AM \citep{torabi2018cloning,yu2020MOPO,kidambi2020MOReL}. Lastly, we generated datasets with sufficient exploration and coverage over the entire state-action space, and trained all policies using a vanilla RL algorithm (i.e., FQI) that does not address distribution shift. Recently proposed algorithms that specifically account for distribution shift \citep{kidambi2020MOReL,yu2020MOPO,kumar2020CQL,liu2020MBS} require additional hyperparameters, and we expect our framework to be useful for selecting these hyperparameters if the OPE step also considers distribution shift (e.g., by restricting the state-conditional action space to disallow rare actions). Despite these limitations, our empirical results suggest that using OPE for model selection in offline RL is a promising solution. 

In this paper, we outlined important practical considerations when using OPE for model selection in offline RL. Our work serves as a guide for RL practitioners in healthcare and other high-stakes domains when considering the model selection step, and can ultimately lead to better policies on real-world datasets. We encourage researchers in RL for health to build upon our lessons and report model selection techniques, in an effort to improve reproducibility, enable fair comparisons, and advance the field.

% The OPE validation pipeline can also be used to select offline RL specific algorithms often involve additional hyperparameters 
% We find an interesting trade-off between computation vs ranking quality, suggesting combining them. For example, use WIS to narrow the candidate set by half and then FQE for a more reliable ranking. 
% the ranking correlates quite well with true policy performance, suggesting OPE is a useful solution. However, we also noted several factors affecting, and these depend on the OPE methods and the RL setup. Additionally we analyzed computational requirements. 
% {\shengpu summarize technical analysis and empirical results, overall takeaway, recommendations, and clinical implications}
% and can potentially be combined to select the hyperparameters for recently proposed offline RL algorithms 

% ACKNOWLEDGEMENTS ONLY GO IN THE CAMERA-READY, NOT THE SUBMISSION
\acks{This work was supported by the National Science Foundation (NSF award no. IIS-1553146) and the National Library of Medicine (NLM grant no. R01LM013325). The views and conclusions in this document are those of the authors and should not be interpreted as necessarily representing the official policies, either expressed or implied, of the National Science Foundation nor the National Library of Medicine. This work was supported in part through computational resources and services provided by \href{https://arc.umich.edu/}{Advanced Research Computing} at the University of Michigan, Ann Arbor. The authors would like to thank \href{mailto:admodi@umich.edu}{Aditya Modi} and members of the \href{https://wiens-group.engin.umich.edu/}{MLD3 group} for helpful discussions regarding this work, as well as the reviewers for their constructive feedback.}

\bibliography{ref}

\clearpage
\appendix

\section{Pseudocode for FQI \& FQE} \label{appx:algs}
Below we present the pseudocode for FQI (\cref{alg:FQI}) and FQE (\cref{alg:FQE}), using notation we introduced in \cref{sec:background}.
\begin{figure}[H]
\centering
\scalebox{0.67}{%
\begin{minipage}{0.7\linewidth}%
\begin{algorithm}[H]
  \caption{Fitted Q Iteration \\ \hspace*{\algorithmicindent} \citep{ernst2005batchRL,riedmiller2005NFQ}}\label{alg:FQI}
  \begin{algorithmic}[1]
    \State \textbf{Input:} dataset $\mathcal{D}$, 
    \Statex[1] function class $\mathcal{F}$, max iteration $H$
    \State \textbf{Initialize} $\tilde{Q}_0(s_i,a_i)=0$ for all $(s_i, a_i) \in \mathcal{D}$
    \For{$h$ \textbf{in} $1 \dots H$}
        \State \text{Compute target} $y_i = r_i + \gamma \max_{a'\in\mathcal{A}}\tilde{Q}_{h-1}(s'_i, a') \ \forall i$
        \State \text{Build training set} $\tilde{\mathcal{D}}_h = \{(s_i, a_i), y_i\}_{i=1}^{N}$
        \State \text{Solve a supervised learning problem} 
        \Statex[2] $\tilde{Q}_{h} \gets \argmin_{f \in \mathcal{F}} \sum_{i=1}^{N} (f(s_i,a_i) - y_i)^2$
    \EndFor
    \State \textbf{Output:} $\pi(s) = \argmax_{a\in\mathcal{A}} \tilde{Q}_H(s,a)$
  \end{algorithmic}
\end{algorithm}
\end{minipage}
}%
\quad 
\scalebox{0.67}{%
\begin{minipage}{0.7\linewidth}%
\begin{algorithm}[H]
  \caption{Fitted Q Evaluation \\ \hspace*{\algorithmicindent} \citep{le2019FQE,voloshin2019OPE}}\label{alg:FQE}
  \begin{algorithmic}[1]
    \State \textbf{Input:} dataset $\mathcal{D}$, policy to be evaluated $\pi$, 
    \Statex[1] function class $\mathcal{F}$, max iteration $H$
    \State \textbf{Initialize} $\tilde{Q}_0(s_i,a_i)=0$ for all $(s_i, a_i) \in \mathcal{D}$
    \For{$h$ \textbf{in} $1 \dots H$}
        \State \text{Compute target} $y_i = r_i + \gamma \sum_{a'\in\mathcal{A}}\pi(a'|s'_i) \tilde{Q}_{h-1}(s'_i, a') \ \forall i$
        \State \text{Build training set} $\tilde{\mathcal{D}}_h = \{(s_i, a_i), y_i\}_{i=1}^{N}$
        \State \text{Solve a supervised learning problem} 
        \Statex[2] $\tilde{Q}_{h} \gets \argmin_{f \in \mathcal{F}} \sum_{i=1}^{N} (f(s_i,a_i) - y_i)^2$
    \EndFor
    \State \textbf{Output:} $\tilde{Q}_H$
  \end{algorithmic}
\end{algorithm}
\end{minipage}
}%
\end{figure}

\section{OPE Computational Requirements} \label{appx:OPE-compute}
As we have mentioned in \cref{sec:applyOPE}, each OPE estimator depends on auxiliary models. For example, WIS requires a model of the behavior policy $\hat{\pi}_b$, whereas AM requires the transition/reward models $\hat{p}$ and $\hat{r}$. Here, we further discuss how these models can be obtained from data, for RL problems with either a discrete or continuous state space. We assume a discrete action space and leave continuous action spaces to future work. 

\paragraph{Discrete state space.} This corresponds to the tabular setting, where the ``models'' are probability/value tables and can be obtained via maximum likelihood estimation (MLE) using count-based averaging:
\begin{align*}
\hat{\pi}_b(a|s) &= \frac{\operatorname{count}(s,a)}{\operatorname{count}(s)} &
\hat{p}(s'|s,a) &= \frac{\operatorname{count}(s,a,s')}{\operatorname{count}(s,a)} \\
\hat{r}(s,a) &= \frac{\sum_{i=1}^{N}r_i\mathbbm{1}_{(s_i=s, a_i=a)}}{\operatorname{count}(s,a)} &
\hat{\mu}_0(s) &= \frac{\operatorname{count}(s_1=s)}{m}
\end{align*}
where $\operatorname{count}(s,a)$ and $\operatorname{count}(s,a,s')$ are the (marginal) counts of the state-action pair and state-action-next-state transition in $\mathcal{D}$ respectively. These quantities can be found relatively efficiently by looping through the dataset once and accumulating the respective counts. 

\paragraph{Continuous state space.} In this case, a state is represented as a feature vector $\mathbf{x}(s) \in \mathbb{R}^d$. Thus, all of the above-mentioned models need to be approximated using some parameterized functions (e.g., neural networks). These models (as shown in \cref{tab:OPE-models}) are independent from the main policy learning process; they can be treated as classification or regression tasks and trained using standard supervised learning techniques (e.g., stochastic gradient descent with early stopping on a validation split). Furthermore, each of these models may be used for inference for multiple times given different inputs. We summarize the total number of additional models and inference runs for each OPE in \cref{tab:OPE_computation}. 

\begin{table}[h]
    \centering
    \caption{Details of auxiliary supervised models required by each OPE for problems with continuous state spaces. }
    \label{tab:OPE-models}
    \scalebox{0.85}{
    \begin{tabular}{ccrlcc}
    \toprule
    \textbf{OPE} & \textbf{Model} & \multicolumn{2}{l}{\hspace*{2.25em}\textbf{Pattern Set}}  & \textbf{Task Type} & \textbf{Loss} \\
    & & $\{x_i$,& $y_i\}_{i=1}^{N}$ & & \\
    \midrule
    WIS & $\hat{\pi}_b(a|s)$ & $\mathbf{x}(s_i)$ & $a_i$ & classification & cross entropy \\
    \noalign{\vskip 3pt}
    \hline
    \noalign{\vskip 3pt}
    \multirow{2}{*}{AM}& $\hat\Delta_s(s,a)$ & $\langle\mathbf{x}(s_i), a_i\rangle$ & $\mathbf{x}(s'_i)-\mathbf{x}(s_i)$ & regression & mean squared error \\
    & $\hat{r}(s,a)$ & $\langle\mathbf{x}(s_i), a_i\rangle$ & $r_i$ & regression & mean squared error \\
    \noalign{\vskip 3pt}
    \hline
    \noalign{\vskip 6pt}
    FQE & $\tilde{Q}_{h}(s,a)$ & $\langle\mathbf{x}(s_i), a_i\rangle$ & $\displaystyle r_i+\max_{a'\in\mathcal{A}}\tilde{Q}_{h-1}(s'_i,a')$ & regression & mean squared error \\
    \noalign{\vskip 1pt}
    \bottomrule
    \end{tabular}
    }%
\end{table}

\begin{table}[h]
    \centering
    \caption{Summary of computational requirements of OPE used for model selection for $K$ candidate policies. We assume the dataset consists of $m$ episodes and $N$ transitions, and $H$ is the length of the evaluation horizon. For WDR we only list additional computation that are not already computed in other OPE estimators. }
    \label{tab:OPE_computation}
    \scalebox{0.85}{
    \begin{tabular}{ccccc}
    \toprule
    \multirow{2}{*}{\textbf{OPE}} & \multicolumn{2}{c}{\textbf{Fitting}} & \multicolumn{2}{c}{\textbf{Inference}} \\
     & Runs & Samples & Runs & Samples  \\
    \midrule
    WIS & 1 & $N$ & $1$ & $N$ \\
    AM & 2 & $N$ & $2K$ & $mH$ \\
    FQE & $KH$ & $N$ & $KH$; $K$ & $N$; $m$ \\
    WDR-AM & --- & --- & $2|\mathcal{A}|^{H} HK$ & $N$ \\
    WDR-FQE & --- & --- & $K$ & $N$ \\
    \bottomrule
    \end{tabular}
    }%
\end{table}

\section{Sepsis Simulator - Details} \label{appx:sepsisSim}

We use the sepsis simulator adapted from \citet{oberst2019gumbel} and \citet{futoma2020popcorn}, which is crudely modeled after the physiology of patients with sepsis. Our experiments are based on an implementation publicly available at \url{https://github.com/clinicalml/gumbel-max-scm/tree/sim-v2/sepsisSimDiabetes}. 
\begin{itemize}
    \item \textbf{Action space.} There are 8 actions based on combinations of 3 binary treatments: antibiotics, vasopressors, and mechanical ventilation, where each treatment can affect certain vital signs and may raise/lower their values with pre-specified probabilities. 
    \item \textbf{State space.} The underlying patient state consists of 5 discrete variables: a binary indicator for diabetes status, and 4 ordinal-valued vital signs (heart rate, blood pressure, oxygen concentration, glucose). The previously administered treatments are also encoded in the state as 3 additional binary variables. See \cref{tab:sepsisSim-states} for more details. We consider two state representations: a discrete state space with $|\mathcal{S}| = 1,440$, as well as a feature vector $\mathbf{x}(s) \in \{0,1\}^{21}$ that uses a one-hot encoding for each underlying variable. We separately added two absorbing states to represent discharge and death. 
    \item \textbf{Initial states distribution.} The diabetes indicator is set to 1 w.p. $0.2$. To make the problem easier, the MDP can start from any non-terminating state (including those with treatments enabled) with equal probability (conditioned on the diabetes indicator value), for a total of $303 \times 2 = 606$ initial states. 
    \item \textbf{Transition dynamics.} The treatments (either added or withdrawn) affect different state variables and are applied sequentially as detailed in \cref{tab:sepsisSim-transition}. 
    \item \textbf{Rewards \& termination condition.} A patient is discharged only when all vitals are normal and all treatments have been stopped; death occurs if 3 or more vitals are abnormal. Rewards are sparse and only assigned at the end of each episode (when transitioning from a terminating state to one of the two absorbing states), with $+1$ for survival and $-1$ for death, after which the system transitions into the respective absorbing state and obtains $0$ reward afterwards. 
\end{itemize}

\begin{table}[h]
    \centering
    \caption{Details on state variables.}
    \label{tab:sepsisSim-states}
    \scalebox{0.9}{%
    \begin{tabular}{llcc}
    \toprule
    \textbf{Variable} & \textbf{Name} & \multicolumn{2}{c}{\textbf{Levels (numeric / text)}} \\
    \midrule
    hr & Heart Rate & \{0,1,2\} & \{L,N,H\} \\
    sbp & Systolic Blood Pressure & \{0,1,2\} & \{L,N,H\} \\
    o2 & Percent Oxygen & \{0,1\} & \{L,N\} \\
    glu & Glucose & \{0,1,2,3,4\} & \{LL,L,N,H,HH\} \\
    abx & Antibiotics & \{0,1\} & \{off,on\} \\
    vaso & Vasopressors & \{0,1\} & \{off,on\} \\
    vent & Mechanical Ventilation & \{0,1\} & \{off,on\}\\
    diab & Diabetes Indicator & \{0,1\} & \{no,yes\} \\
    \bottomrule
    \end{tabular}
    }
\end{table}

\begin{table}[h]
\centering
\caption{Details on transition dynamics.}
\label{tab:sepsisSim-transition}
\scalebox{0.9}{%
\begin{tabular}{cc|ccc|cl}
\toprule
\textbf{Step} & \textbf{Variable} & \textbf{Current} & \textbf{New} & \textbf{Change} & \multicolumn{2}{c}{\textbf{Effect}} \\
\hline
\multirow{3}{*}{1} & \multirow{3}{*}{abx} & -- & on & -- & \makecell[c]{hr \\ sbp} & \makecell[l]{H $\rightarrow$ N w.p. 0.5 \\ H $\rightarrow$ N w.p. 0.5} \\
\cline{3-7}
& & on & off & withdrawn & \makecell[c]{hr \\ sbp} & \makecell[l]{N $\rightarrow$ H w.p. 0.1 \\ N $\rightarrow$ H w.p. 0.5} \\
\hline
\multirow{2}{*}{2} & \multirow{2}{*}{vent} & -- & on & -- & o2 & L $\rightarrow$ N w.p. 0.7 \\
\cline{3-7}
& & on & off & withdrawn & o2 & N $\rightarrow$ L w.p. 0.1 \\
\hline
\multirow{7}{*}{3} & \multirow{7}{*}{vaso} & \multirow{3}{*}{--} & \multirow{3}{*}{on} & \multirow{3}{*}{--} & sbp & \makecell[l]{L $\rightarrow$ N w.p. 0.7 (non-diabetic) \\ N $\rightarrow$ H w.p. 0.7 (non-diabetic) \\ L $\rightarrow$ N w.p. 0.5 (diabetic) \\ L $\rightarrow$ H w.p. 0.4 (diabetic) \\ N $\rightarrow$ H w.p. 0.9 (diabetic)} \\
\cline{6-7}
& & & & & glu & \makecell[l]{LL $\rightarrow$ L,  L $\rightarrow$ N, N $\rightarrow$ H, \\ H $\rightarrow$ HH w.p. 0.5 (diabetic)} \\
\cline{3-7}
& & on & off & withdrawn & sbp & \makecell[l]{N $\rightarrow$ L w.p. 0.1 (non-diabetic) \\ H $\rightarrow$ N w.p. 0.1 (non-diabetic) \\ N $\rightarrow$ L w.p. 0.05 (diabetic) \\ H $\rightarrow$ N w.p. 0.05 (diabetic)} \\
\hline
\makecell[c]{4 \\ 5 \\ 6 \\ 7} & \makecell[c]{hr \\ sbp \\ o2 \\ glu} & \multicolumn{3}{c}{fluctuate} & \multicolumn{2}{|l}{\makecell[l]{vitals spontaneously fluctuate when not affected \\ by treatment (either enabled or withdrawn) \\ \textbullet\ the level fluctuates $\pm 1$ w.p. 0.1, except: \\ \textbullet\ glucose fluctuates $\pm 1$ w.p. 0.3 (diabetic) }} \\
\bottomrule
\end{tabular}
}
\end{table}

% For data generation, episodes are truncated at a maximum length of 20. We used two types of behavior policies: a uniformly random policy that provides as much exploration as possible, and a near-optimal $\epsilon$-greedy policy with $\epsilon=0.10$ probability taking a random action in order to balance good behavior performance and the amount of exploration. For each behavior policy, we generated 10 pairs of datasets (for training and validation), each with large number of episodes ($m=10,000$) and a different random seed. A more detailed description of the simulation setup can be found in \cref{appx:sepsisSim} and the source code. 

\section{Additional Experimental Results} \label{appx:results}

\subsection{Model Selection using FQI Values \& TD Errors} \label{appx:HP-scatter}

Following \cref{sec:results-arch}, we consider two additional metrics to be used as a proxies of validation performance. 
\begin{itemize}
    \item FQI values. This is based on the predicted values from the Q-network learned by FQI. It is in a way similar to ``training performance'' in the supervised learning setting. Similar to all OPE scores, we compute FQI score with respect to the empirical initial state distribution (from the validation set): 
    \[ \hat{v}_{\FQI}(\pi) = \frac{1}{m} \sum_{j=1}^{m} \widehat{V}^{\pi}_{\FQI}(s_1^{(j)}), \text{ where } \widehat{V}^{\pi}_{\FQI}(s) = \sum_{a\in\mathcal{A}} \pi(a|s) \widehat{Q}^{\pi}_{\FQI} (s, a). \]
    \item RMS-TDE, i.e., root-mean-squared temporal difference error \citep{dann2014TD}, is the objective function (i.e., ``loss function'') used in temporal difference learning algorithms such as FQI (more precisely, the loss function is MS-TDE without the square root). This is commonly calculated using the empirical distribution of transitions from the validation set. 
    \[ \ell = \sqrt{\mathbb{E}_{(s_i, a_i, r_i, s'_i)\sim \mathcal{D}}[\delta_i^2]} \ ,\] where \( \delta_i = \left[r_i + \gamma \sum_{a'} \pi(a'|s'_i) \widehat{Q}(s'_i,a')\right] - \widehat{Q}(s_i, a_i) \) defines the temporal difference error of a transition tuple, and $\widehat{Q}$ is the same $\widehat{Q}^{\pi}_{\FQI}$ from FQI training. 
\end{itemize}
We evaluate how well FQI values and RMS-TDE can serve as ``validation scores'' for model selection by measuring the ranking quality and regret (\cref{fig:sepsis-exp-HP-appx}). We observe these two ``validation scores'' clearly underperforms the OPE methods considered in the main paper, achieving Spearman's rank correlation $|\rho|<0.3$ with very little trend in the scatter plots, and high regret even when considering the top-5/top-10 set. This suggests that FQI values and RMS-TDE are not effective metrics for model selection in offline RL, corroborating the findings of \citet{irpan2019OPC} and \citet{paine2020hyperparameter}. 

\begin{figure}[h]
    \centering
    \includegraphics[valign=b,scale=0.58,trim={0.5cm 0 0 0}]{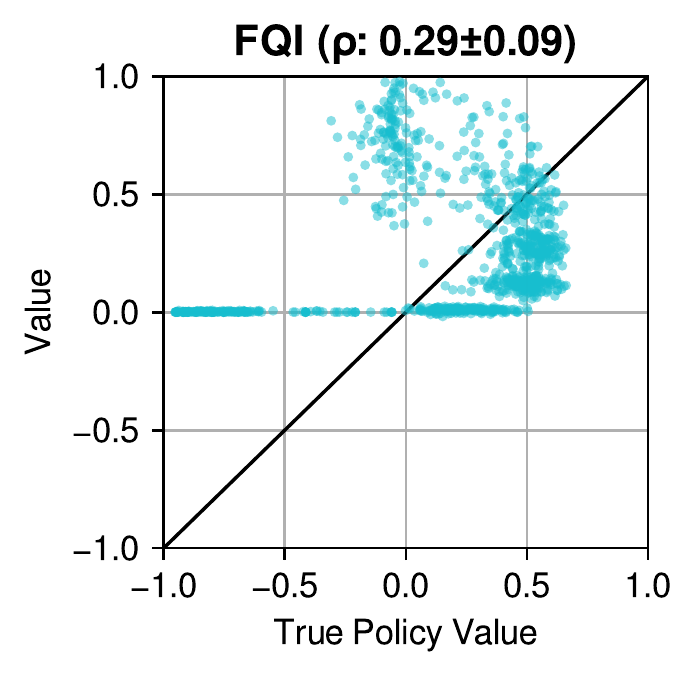}
    \includegraphics[valign=b,scale=0.58]{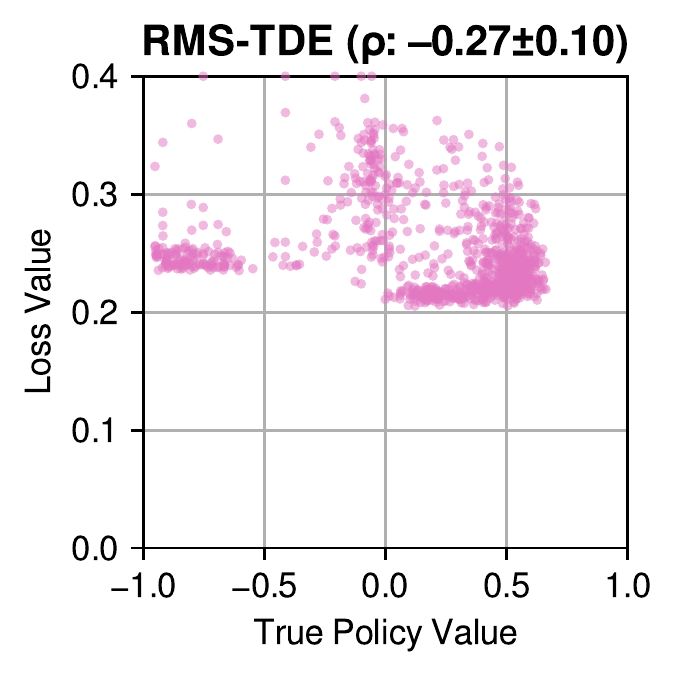}
    \includegraphics[valign=b,scale=0.72,trim={0 -0.3cm 0.5cm 0}]{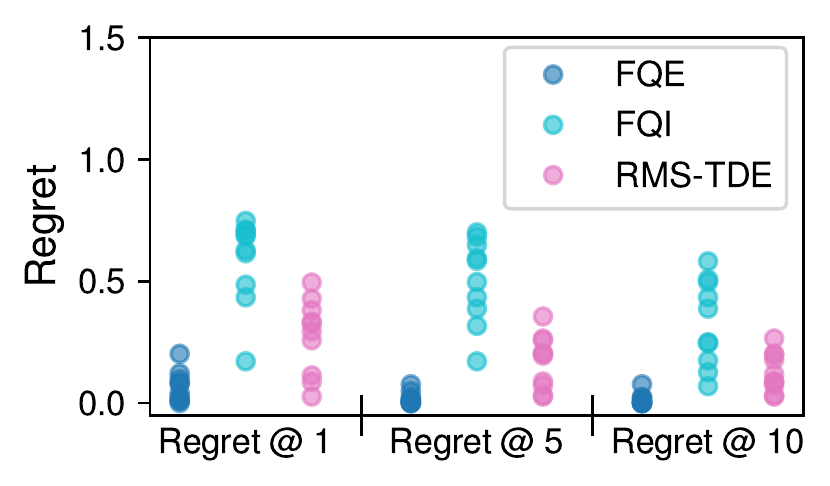}
    \caption{Left -- Scatter plots comparing true policy value with FQI values and RMS-TDE. All values are clipped to the range of possible returns $[-1,1]$ for clarity of visualization. Right -- top-$n$ regret in model selection for FQI and RMS-TDE.}
    \label{fig:sepsis-exp-HP-appx}
\end{figure}

\subsection{Sensitivity Analyses for Early Stopping} \label{appx:early-stopping}
In this section, we focus on the model selection problem of early stopping, i.e., identifying the best number of training iterations in FQI. This allows us to consider both tabular and function approximation settings. The candidate policy set is made up of all the polices derived from Q-functions along the training path of an FQI run. We focus our experiments on WIS/AM/FQE, because WDR resulted in large variances and heavily skews the plots. 

\subsubsection{Sensitivity to OPE Hyperparameters} \label{appx:results-HP}
Since each OPE method has its own auxiliary hyperparameters, it is important to understand how they may affect OPE validation ranking quality, especially as there is no effective way of selecting them. In this experiment, we focus on the hyperparameters that must be set \textit{a priori}, namely: the policy softening parameter $\varepsilon$ and the evaluation horizon $H$ for AM/FQE. We vary these hyperparameters in the reasonable ranges and measure the corresponding regret following model selection (similar to \cref{sec:results-HP}). For continuous state space models, even though the auxiliary models contain hyperparameters themselves (e.g., the neural architecture), these can be tuned via standard model selection within each auxiliary supervised learning task; for consistency, we kept these hyperparameters the same throughout. Note that in the tabular setting, AM can be computed analytically (instead of via Monte-Carlo rollouts) and does not have any hyperparameters, and is therefore omitted from this experiment. 

\begin{figure}[h]
    \centering
    \begin{tabular}{cc}
        \textsf{\textbf{Sepsis (discrete states)\phantom{x}}} & \textsf{\textbf{Sepsis (continuous states)\phantom{xx}}} \\
        \includegraphics[scale=0.58,trim={0.5cm 0 0 0}]{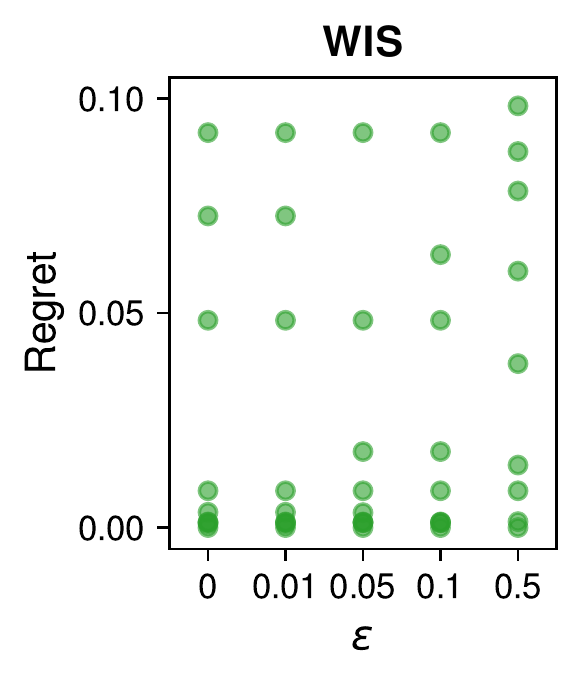} 
        \includegraphics[scale=0.58,trim={1.7cm 0 0 0},clip]{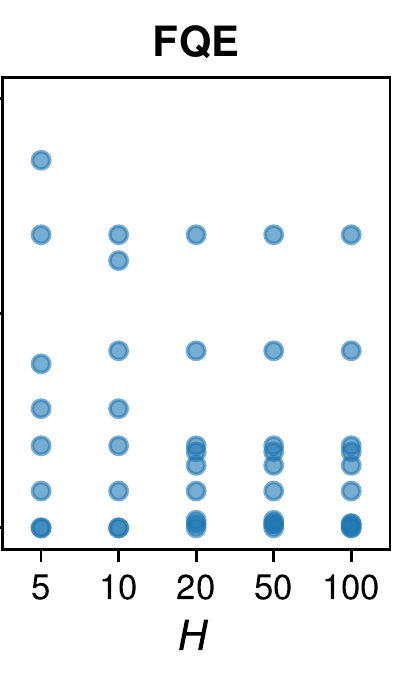} \hspace{0.75cm} &\hspace{0.5cm} \includegraphics[scale=0.58,trim={2.5cm 0 0 0}]{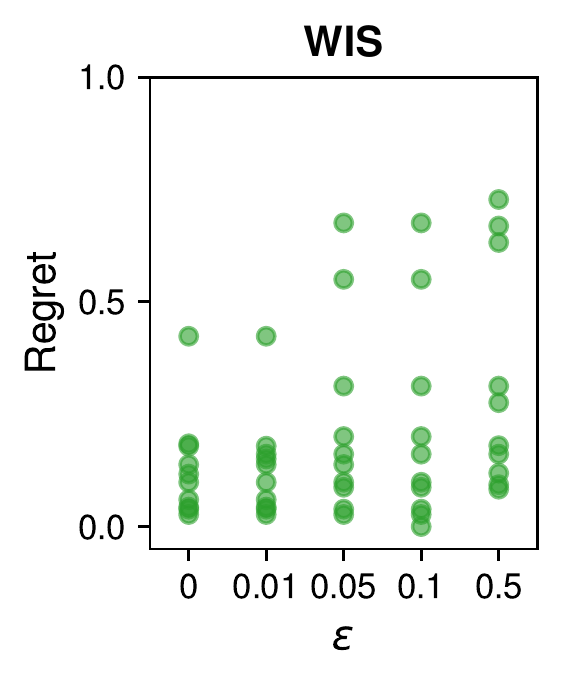}  \includegraphics[scale=0.58,trim={1.5cm 0 0 0},clip]{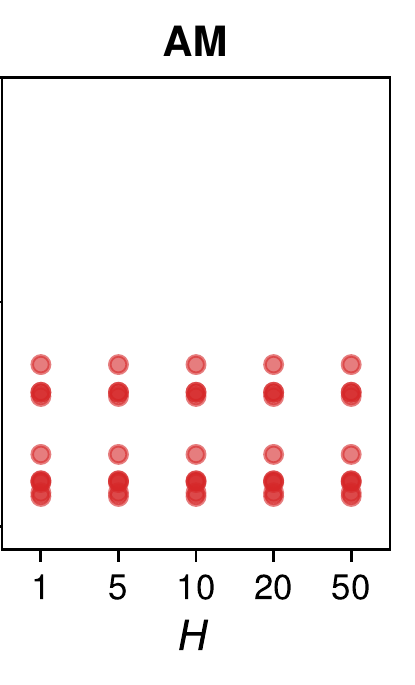} \includegraphics[scale=0.58,trim={1.5cm 0 0 0},clip]{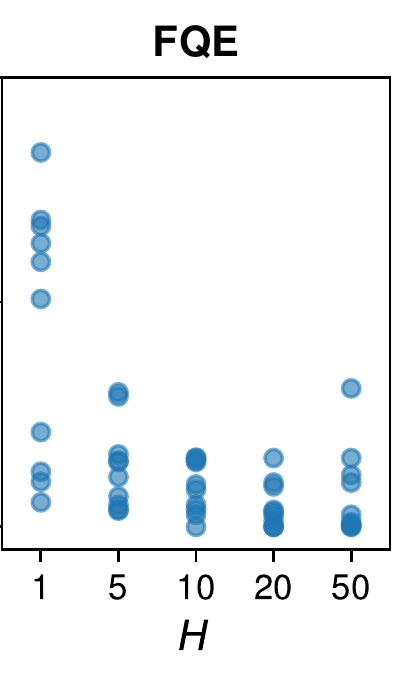} 
    \end{tabular}
    \caption{Regret comparisons for model selection with different OPE hyperparameters. }
    \label{fig:sepsis-exp4-appx}
\end{figure}

\paragraph{Results and Discussion.} For the tabular setting (\cref{fig:sepsis-exp4-appx} left), both WIS and FQE are relatively robust to hyperparameter values ($\varepsilon$ and $H$, respectively) within the reasonable ranges, maintaining a low regret of $<0.1$. For the continuous state setting (\cref{fig:sepsis-exp4-appx} right), although AM appears the most robust to different $H$ values, the regret it obtains is higher than the other two OPE and does not decrease as $H$ increases. WIS obtains the lowest regret at a low $\varepsilon$, where using $\varepsilon$ values that are too large ($\tilde{\pi}$ is too far from $\pi$) led to regret with slightly larger variance. FQE achieves the lowest regret at an intermediate value of $H=20$. This could be due to a form of overfitting within auxiliary models that could cause longer evaluation horizons to have poorer estimates when approximation errors are present \citep{jiang2015dependence}. 

% \begin{figure}[h]
%     \centering
%     \includegraphics[width=.7\linewidth]{figs/plot-legend.pdf}
%     \includegraphics[scale=0.8]{figs/sepsis-tab-exp2-size.pdf} \quad \includegraphics[scale=0.8]{figs/sepsis-cont-exp2-size.pdf}
%     \caption{OPE on varying validation sizes}
%     \label{fig:sepsis-exp2}
% \end{figure}

\subsubsection{Size of Validation Set} \label{appx:results-size}
In this experiment, we vary the amount of validation data used for OPE, similar to \cref{sec:results-size}. The training set and training procedures were kept the same to generate the same candidate set for each run, so the only difference is the validation sample size. % For sepsis simulator, we used validation sets containing the following number of episodes: $\{500, 1000, 5000, 10000\}$.

\begin{figure}[h]
    \centering
    \begin{tabular}{c}
        \quad \textsf{\textbf{Sepsis (discrete states)}} \\
        \phantom{\includegraphics[valign=t,scale=.8]{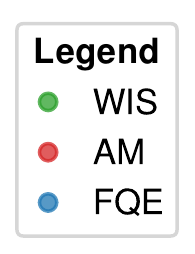}} 
        \hspace{-0.5cm}
        \includegraphics[valign=t,scale=0.75,trim={0 0 0 0.675cm},clip]{figs/sepsis-tab-exp2-size.pdf} \quad \includegraphics[valign=t,scale=0.75,trim={0 0 0 0.675cm},clip]{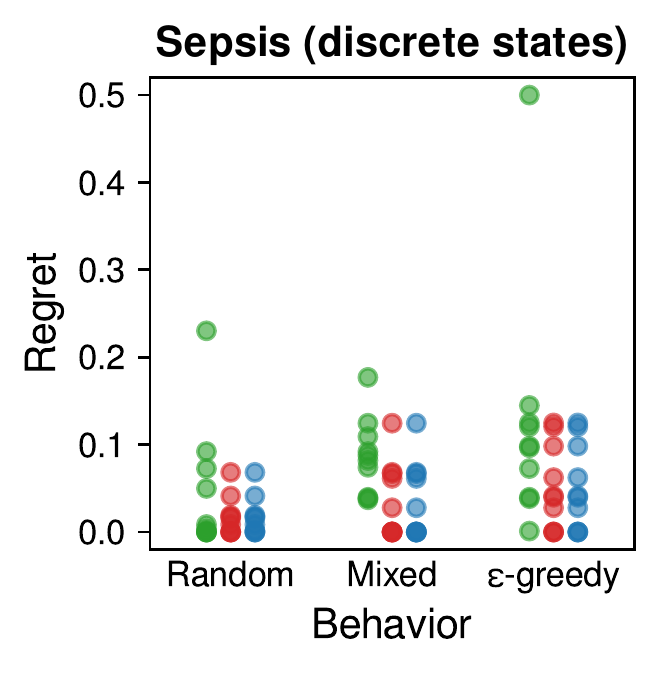}  \includegraphics[valign=t,scale=.8]{figs/plot-legend-appx.pdf} \\
        \quad \textsf{\textbf{Sepsis (continuous states)}} \\
        \hspace{-0.5cm}
        \includegraphics[scale=0.75,trim={0 0 0 0.675cm},clip]{figs/sepsis-cont-exp2-size.pdf} \quad \includegraphics[scale=0.75,trim={0 0 0 0.675cm},clip]{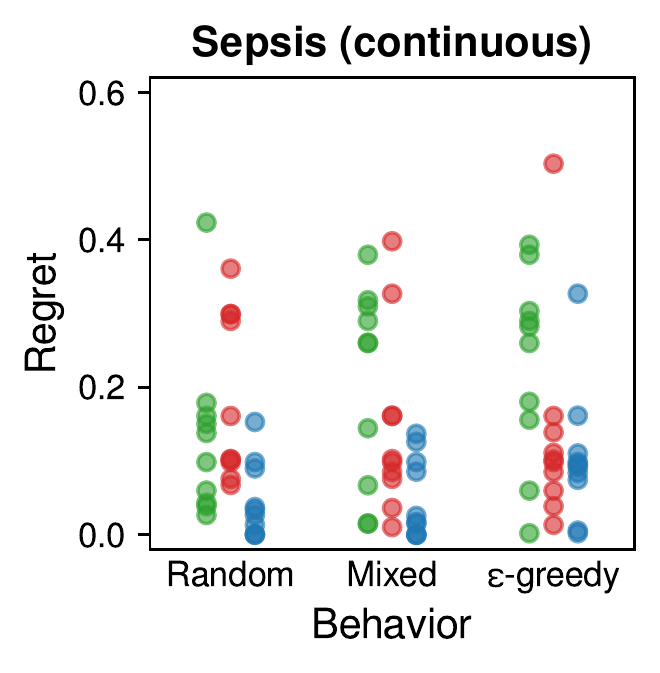}
    \end{tabular}
    \caption{Regret comparison for OPE using validation data with varying sample sizes and varying behaviors. A larger validation set and more exploratory behavior generally results in the lowest regret. }
    \label{fig:sepsis-exp2-exp3-appx}
\end{figure}

\paragraph{Results and Discussion.} For settings with both discrete and continuous states  (\cref{fig:sepsis-exp2-exp3-appx}, left column), FQE consistently benefits from larger validation sizes, achieving lower regret. The trend is less apparent for WIS (which generally results in noisier policy value estimates) and for AM (due to model misspecification, especially with function approximation). 

% Another important consideration is the size of validation set relative to the training set (10k in all cases). While in supervised learning it is typical the validation set is much smaller than the training set, in our offline RL setting, using a large validation set (e.g., as large as the training set) can help alleviate the negative effects of overfitting to the greatest extent. %This effect is also present although slightly weaker for WDR methods. 

\subsubsection{Type of Behavior Policy and Extent of Exploration} \label{appx:results-behavior}
The behavior policies used to collect validation data can affect which part of the state-action space receives sufficient exploration and the resultant OPE model selection performance. Similar to \cref{sec:results-behavior}, we consider the random behavior policy, the near-optimal $\epsilon$-greedy behavior policy, and a mixture of both to create the validation sets each containing $m=10,000$ episodes. The training set and training procedures were kept the same to generate the same candidate set for each run, so that the only difference comes from different validation data. 

\paragraph{Results and Discussion.}
The random behavior policy produces more exploratory data compared to $\epsilon$-greedy, leading to slightly lower regret in general (\cref{fig:sepsis-exp2-exp3-appx}, right column). Surprisingly, even when a mixture of multiple behavior policies is used, WIS is not heavily affected by this policy misspecification (though in the tabular setting, the mixed behavior data produces slightly higher median regret). FQE and AM remain relatively stable across different behavior settings and lead to the lowest regret in most cases with smaller variance.

\section{Additional Related Work} \label{appx:related-work}
The quality of policies learned by RL methods can depend heavily on hyperparameter choices, especially when combined with deep learning \citep{henderson2018matters}. In supervised domains, model selection and hyperparameter tuning has been an essential tool to improve model performance in numerous application settings \citep{bergstra2012random,li2017hyperband,snoek2012practical}. Yet model selection for offline RL has received relatively less attention, especially in empirical analyses on real-world datasets. Closely related to our setup, \citet{paine2020hyperparameter} demonstrated that FQE is a useful strategy for hyperparameter selection in offline RL on several benchmark control tasks; \citet{fu2021benchmarks} published a benchmark suite for comparing OPE methods on continuous control tasks with Spearman's correlation as one of evaluation metrics, and also found FQE to be the most effective; \citet{mandel2014educational} combined importance sampling with a ``temporal'' cross-validation technique and applied their approach on an educational game problem. In contrast to these works, we compare multiple OPE estimators both qualitatively and quantitatively to address questions of practical importance for healthcare settings. There are several other notable works in this area, for which we provide a brief overview below. \citet{fard2010PAC} established PAC-Baysian bounds for offline RL model selection but assumed explicit prior information about environment dynamics or value functions. \citet{farahmand2011BErMin} proposed \textsc{BErMin}, which focuses exclusively on Bellman error minimization rather than policy value maximization; however, due to overestimation of value functions during training, these errors/losses are not reflective of the true policy performance (corroborated by our results in \cref{appx:HP-scatter} and by \citet{irpan2019OPC,paine2020hyperparameter} who showed the poor ranking quality of the loss function). \citet{irpan2019OPC} developed off-policy classification to rank policies, but their approach is limited to goal-directed domains with a sparse binary reward. \citet{lee2020BOPAH} proposed a gradient-based hyperparameter optimization procedure but is limited to a single hyperparameter that controls the extent of KL-regularization. \citet{kuzborskij2021confident} considered a model selection pipeline similar to ours for contextual bandits and developed a new scoring function, whereas we focus on the sequential setting in offline RL and practical challenges in applying existing OPE methods as scoring functions. \citet{xie2020BVFT} studies the problem from a theoretical perspective and proposed BVFT, reducing model selection to pairwise comparisons between candidate policies and provided finite-sample guarantees; in contrast, we consider the more practical aspects of how model selection can (and should) be carried out when applying existing RL/OPE approaches on observational dataset. 

\newpage
\section{Additional Details of the Two-Stage Selection Procedure} \label{appx:two-stage}
In this appendix, we offer some theoretical insight for the expected behavior of the two-stage selection procedure. We first present a general result on the effect of random ranking. 

\begin{theorem} \label{thm:prob_regret}
Suppose we have a list of $C$ values $[v_{[1]}, \cdots, v_{[C]}]$ listed in decreasing order. Given a random permutation $[v_{1}, \cdots, v_{C}]$ over these $C$ values, the probability that the first $A$ positions contain at least one of the $B$ largest values is:
\begin{align*}
    \quad \Pr(\max\{v_1,\cdots,v_{A}\} \geq v_{[B]}) = 
    \begin{cases}
    1 & \text{if } A+B > C \\
    1 - \frac{(C-A)!(C-B)!}{C!(C-A-B)!} & \text{if } A+B \leq C
    \end{cases}
\end{align*}
\end{theorem}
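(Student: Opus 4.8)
The plan is to compute the probability via its complement and exploit the symmetry of a uniform random permutation. First I would note that, treating the values as distinct (ties can be broken arbitrarily without changing any count), the event $\max\{v_1,\dots,v_A\} \ge v_{[B]}$ holds if and only if at least one of the $B$ largest values $v_{[1]},\dots,v_{[B]}$ lands in one of the first $A$ positions, since $v_{[1]},\dots,v_{[B]}$ are precisely the values that are $\ge v_{[B]}$. This reframes the problem as a purely combinatorial placement question.

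Next I would pass to the complement: the "bad" event is that all $B$ of the largest values lie among the last $C-A$ positions. Under a uniform random permutation, the unordered set of positions occupied by the top $B$ values is a uniformly random $B$-element subset of $\{1,\dots,C\}$, so I can discard the relative order among the top $B$ values and among the remaining $C-B$ and simply count subsets. There are $\binom{C}{B}$ equally likely choices in total.

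Then comes the case split. If $C-A < B$, i.e. $A+B > C$, there are not enough positions outside the first $A$ to hold all $B$ top values, so the bad event is impossible and the probability equals $1$. Otherwise, when $A+B \le C$, the number of subsets contained entirely in the last $C-A$ positions is $\binom{C-A}{B}$, giving complement probability $\binom{C-A}{B}/\binom{C}{B}$. Expanding $\binom{C-A}{B}=\frac{(C-A)!}{B!\,(C-A-B)!}$ and $\binom{C}{B}=\frac{C!}{B!\,(C-B)!}$ and cancelling $B!$ yields $\frac{(C-A)!(C-B)!}{C!\,(C-A-B)!}$; subtracting from $1$ recovers the stated formula.

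There is no serious obstacle, as the argument is elementary counting. The only points requiring care are the reduction in the first step — confirming that "the maximum over the first $A$ positions reaches the threshold $v_{[B]}$" is exactly "some top-$B$ value occupies the first $A$ positions" — and the symmetry claim that the positions of the top $B$ values form a uniform subset, which is what justifies the ratio $\binom{C-A}{B}/\binom{C}{B}$. The remaining detail is the boundary bookkeeping in the case $A+B > C$, ensuring the factorial formula is invoked only when $C-A-B \ge 0$.
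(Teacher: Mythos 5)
Your proposal is correct and follows essentially the same route as the paper: both pass to the complement event that none of the top $B$ values occupies the first $A$ positions, count placements of the $B$ top values among the last $C-A$ positions versus all $C$ positions, and handle the case $A+B>C$ by noting the bad event is impossible. The only cosmetic difference is that you count unordered subsets via $\binom{C-A}{B}/\binom{C}{B}$ while the paper counts ordered arrangements via falling factorials; the two ratios cancel to the identical expression.
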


\begin{proof}
We consider the following two cases: 
\begin{itemize}
    \item If $A+B > C$, then the $B$ largest values do not fit in the $C-A$ positions because $B > C-A$, and thus the desired probability is $1$. 
    \item If $A+B \leq C$, then we can consider the complement, i.e., the probability such that none of the $B$ largest values are in the first $A$ positions. This can be obtained by counting the possibilities of arranging $B$ objects into $C$ positions (for the sample space) and $C-A$ positions (for the event space, such that none of the $B$ objects fall into the first $A$ positions), given by \(\frac{(C-A) \cdots (C-A-B+1)}{C \cdots (C-B+1)} = \frac{(C-A)!(C-B)!}{C!(C-A-B)!}\). The desired probability is thus \(1 - \frac{(C-A)!(C-B)!}{C!(C-A-B)!}\). \qedhere
\end{itemize}
\end{proof}

\begin{remark}
Below we show example values of $A$, $B$, $C$, and the corresponding probability.
\begin{table}[h]
    \centering
    \scalebox{0.8}{
    \begin{tabular}{cccc}
        \toprule
        $C$ & $A$ & $B$ & Probability \\
        \midrule
        $1$ & $1$ & $1$ & $1$ \\
        $K$ & $\alpha$ & $1$ & $\alpha/K$ \\
        $K$ & $1$ & $\beta$ & $\beta/K$ \\
        \bottomrule
    \end{tabular}
    }
\end{table}
\end{remark}

\begin{remark}
We can view this probability as a cdf, with the corresponding pdf being the probability that the maximum value of the first $A$ positions being exactly the $B$-th largest value $v_{[B]}$ (assuming all values are distinct, wlog). When $A=1$, we can show that the pdf is $1/C$, regardless of the value of $B$. 
\begin{align*}
    &\quad \Pr(v_1 = v_{[B]})  \\
    &= \Pr(v_1 \geq v_{[B]}) - \Pr(v_1 \geq v_{[B-1]}) \\
    &= \textstyle \left( 1 - \frac{(C-1)!(C-(B+1))!}{C!(C-1-(B+1))!} \right) - \left( 1 - \frac{(C-1)!(C-B)!}{C!(C-1-B)!} \right) \\
    &= \textstyle \frac{(C-1)!(C-B+1)!}{C!(C-B)!} - \frac{(C-1)!(C-B)!}{C!(C-B-1))!} \\
    &= \textstyle \frac{C-B+1}{C} - \frac{C-B}{C} \\
    &= 1/C
\end{align*}

\noindent This matches with our intuition because a randomly selected value from a list of $C$ values should have $1/C$ probability of being any one of the $C$ values. While this seems trivial, the general formula we derived in \cref{thm:prob_regret} allows us to compute the probability for any values of $A$ and $B$. 
\end{remark}

\subsection{Performance of Two-Stage Selection}

Suppose that a random ranking over candidate policies is used in the first-stage. We can use \cref{thm:prob_regret} to calculate the probability that a good policy is retained in the pruned subset, by setting $C=K$ the number of candidate policies, and $A=\alpha$ the initial subset size. Here, $B$ controls the definition of a ``good'' policy -- where setting $B=\beta$ means a ``good'' policy is one of the top $\beta$ policies according to the ground-truth policy values -- and this corresponds to the maximum allowable regret. 

In \cref{fig:2stage-analysis} (left), we plot this probability against initial subset size $\alpha$, for different $\beta$ values. At $\beta=1$, the probability of achieving zero regret increases linearly as the initial subset size $\alpha$ grows. However, when a larger level of regret is allowed (larger $\beta$ values), the probability is high even for small $\alpha$ values. This suggests that, for example, pruning the candidate set by half has a high chance of retaining a good policy even if the first-stage OPE has random performance. Since OPE estimators are expected to perform better than random, the above probability provides the lower bound; a (trivial) upper bound can be obtained from an optimal ranking that has probability $1$ for any values of $A$ and $B$, including $A=B=1$. 

\begin{figure}[h]
    \centering 
    \includegraphics[valign=c,scale=0.75]{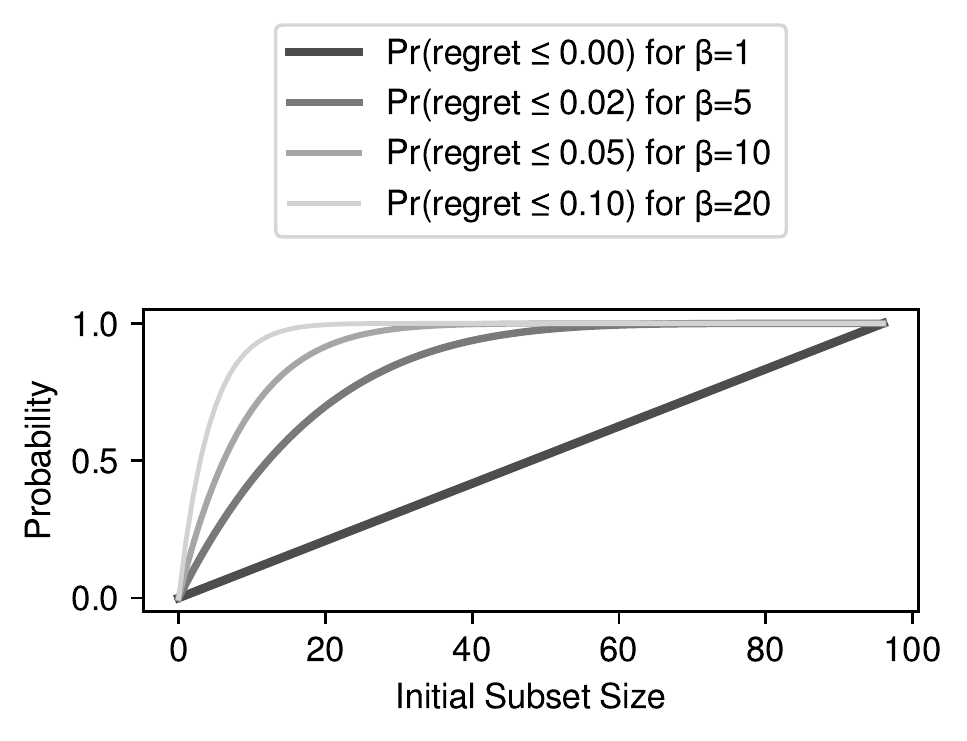}
    \includegraphics[valign=c,scale=0.75]{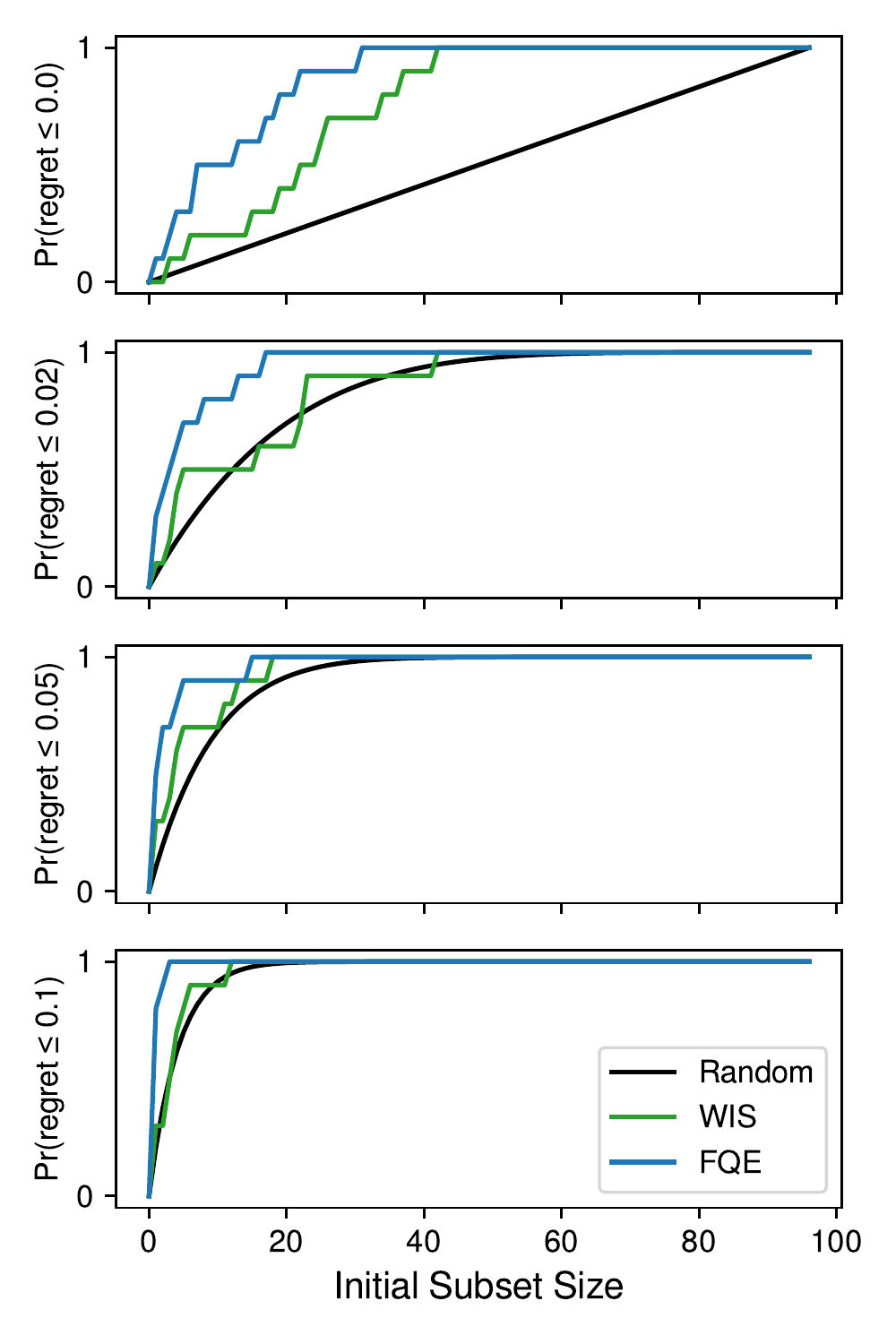}
    \caption{Comparison of cumulative distribution functions of a random OPE, WIS, and FQE, relative to initial subset size $\alpha$, for the 96 candidate policies on the sepsis simulator task for different $\beta$ values (which controls the level of allowable regret). }
    \label{fig:2stage-analysis}
\end{figure}

To compare the performance of WIS and FQE against the random lower bound, we computed empirical cdfs from the 10 replication runs for each OPE method, using the main experiment in 
\cref{sec:results-arch}. In \cref{fig:2stage-analysis} (right), we visualize these cdfs on a set of axes, one for each $\beta$ value. We see that in general, FQE has a higher chance of retaining a ``good'' policy than WIS, if used for the first-stage pruning, for all $\alpha$ and $\beta$ values. Nonetheless, both WIS and FQE have cdfs larger than the random lower bound, suggesting that both OPE estimators indeed perform better than random. 

\subsection{Further Analyses \& Discussion}
The analyses above focused on the sole effect of the first-stage pruning, and thus assumed that the second OPE in the two-stage procedure produces a perfect ranking. One could estimate the overall performance by taking the product of probabilities of the two stages, assuming that rankings from the two stages are independent. However, the independence assumption does not hold in reality, because all OPE validation scores are generated from the same validation data. Furthermore, the quality of ranking from an OPE is challenging to quantify as it depends on many factors: the OPE itself, the data, the variance of each value estimate (this can differ per candidate policy), etc. These analyses are outside the scope of this paper and left for future work. 

\end{document}